\newif\ifprivate                
\newif\ifconf                   
\newif\ifshort                  
\newif\ifcompress               
\newif\ifuc                     
\newif\iffinal                  
\def\eqref#1{equation~\ref{#1}}
\def\Eqref#1{Equation~\ref{#1}}
\def\1{\bm{1}}
\def\eps{\varepsilon}            
\def\epstr{\epsilon}            
\def\fr#1#2{{\textstyle\frac{#1}{#2}}} 
\def\frs#1#2{{^{#1}\!/\!_{#2}}} 
\DeclareMathAlphabet{\mathsfit}{\encodingdefault}{\sfdefault}{m}{sl}
\SetMathAlphabet{\mathsfit}{bold}{\encodingdefault}{\sfdefault}{bx}{n}
\def\gA{{\mathcal{A}}}
\def\gC{{\mathcal{C}}}
\def\gE{{\mathcal{E}}}
\def\gH{{\mathcal{H}}}
\def\gL{{\mathcal{L}}}
\def\gM{{\mathcal{M}}}
\def\gO{{\mathcal{O}}}
\def\gP{{\mathcal{P}}}
\def\gR{{\mathcal{R}}}
\def\gS{{\mathcal{S}}}
\def\gU{{\mathcal{U}}}
\def\gX{{\mathcal{X}}}
\newcommand{\E}{\mathbb{E}}
\newcommand{\KL}{D_{\text{KL}}}
\DeclareMathOperator*{\argmax}{arg\,max}
\DeclareMathOperator*{\argmin}{arg\,min}
\crefname{figure}{Figure}{Figures} 
\def\todo#1{{\color[rgb]{.6,.3,.0}\it[[ToDo:\ #1]]}}\else\def\todo#1{}\fi 
\def\note#1{{\color[rgb]{.5,.5,.0}\it[[Note:\ #1]]}}\else\def\note#1{}\fi 
\renewcommand\nomgroup[1]{%
  \parskip=0ex\parsep=0ex\itemsep=0.5ex%
  \item[\bfseries
  \ifstrequal{#1}{A}{Agent-Environment Interaction}{%
  \ifstrequal{#1}{N}{Number sets}{%
  \ifstrequal{#1}{O}{ActivePTW}{%
  \ifstrequal{#1}{M}{PTW}{%
  \ifstrequal{#1}{P}{Probability and Statistics}{%
  }}}}}%
]}
\title{\ifconf\else\bf\Large\hrule height5pt \vskip 4mm\fi
Partition Tree Weighting for Non-Stationary Stochastic Bandits
\ifconf\else\vskip 4mm \hrule height2pt\fi }
\author{\centerline{\bf Joel Veness ~--~ Marcus Hutter ~--~ Andr\'as Gy\"orgy ~--~ Jordi Grau-Moya} \\
\centerline{Google DeepMind}\\
\centerline{\texttt{\{aixi,mhutter,agyorgy,jordigrau\}@google.com}}
}
\newtheorem{theorem}{Theorem}
\newtheorem{definition}[theorem]{Definition}
\newtheorem{lemma}[theorem]{Lemma}
\newtheorem{proposition}[theorem]{Proposition}
\newenvironment{proof}{{\noindent\bf Proof.}}{\qed\vskip 1ex} 
\def\qed{\hspace*{\fill}\rule{1.4ex}{1.4ex}$\quad$\\} 
\def\fr#1#2{{\textstyle{#1\over#2}}} 
\def\frs#1#2{{^{#1}\!/\!_{#2}}} 
\def\lb{{\log_2}}               
\begin{document}

\maketitle

\begin{abstract}
This paper considers a generalisation of universal source coding for interaction data, namely data streams that have actions interleaved with observations.
Our goal will be to construct a coding distribution that is both universal \emph{and} can be used as a control policy.
Allowing for action generation needs careful treatment, as naive approaches which do not distinguish between actions and observations run into the self-delusion problem in universal settings.
We showcase our perspective in the context of the challenging non-stationary stochastic Bernoulli bandit problem.
Our main contribution is an efficient and high performing algorithm for this problem that generalises the Partition Tree Weighting universal source coding technique for passive prediction to the control setting.
\ifconf\else\vspace{5ex}\def\contentsname{\centering\normalsize Contents}\setcounter{tocdepth}{1}
{\parskip=-2.7ex\tableofcontents}\fi
\end{abstract}

\ifconf\centerline{\sc\large Keywords}\begin{quote}
\else\centerline{\bf\small Keywords}\begin{quote}\fi
stochastic bandits, non-stationary sources, source coding, control policy, self-delusions, partition tree weighting
\par\ifconf\end{quote}\else\end{quote}\fi \vskip 1ex

\section{Introduction}
\label{sec:intro}

This paper explores an alternate perspective on how to construct a type of universal agent from compression principles.
In contrast to popular decision-making frameworks such as reinforcement learning, which are built upon appealing to decision-theoretic notions such as Maximum Expected Utility, we instead construct an agent by trying to minimise the expected number of bits needed to losslessly describe general agent-environment interactions.
The appeal with this approach is that if we can construct a good universal coding scheme for arbitrary agent interactions, one could simply sample from this coding distribution to generate a control policy.
However when considering general agents, whose goal is to work well across \emph{multiple} environments, this question turns out to be surprisingly subtle.
Naive approaches which do not discriminate between actions and observations fail, and are subject to the self-delusion problem \citep{ortega2021shaking}.
In this work, we will adopt a universal source coding perspective to this question, and showcase its efficacy by applying it to the challenging non-stationary stochastic bandit problem.

In the passive case, namely, sequential prediction of observations under the logarithmic loss, there is a well developed universal source coding literature for dealing with non-stationary sources under various types of non-stationarity.
The most influential idea for modelling piecewise stationary sources is the transition diagram technique of \cite{Willems1996}. 
This technique performs Bayesian model averaging over all possible partitions of a sequence of data, cleverly exploiting dynamic programming to yield an algorithm which has both quadratic time complexity and provable regret guarantees.
Since then, a variety of more efficient algorithms \citep{Willems1997,Shamir1998,Hazan2009,Gyorgy2012,veness13,daniely15,cutkosky20a,zhao24} have been developed which trade off worst-case regret for computational efficiency, by dealing with only a subset of the possible partitions, while still maintaining regret guarantees for the entire class of possible partitions.
Of these techniques, arguably the technique of \cite{veness13} has the best complexity/performance trade-off, and this is the technique our work focuses on to extend to the case of control for non-stationary stochastic bandit problems.  

The non-stationary stochastic bandit problem is currently an active area of study, with many algorithms recently being proposed to address aspects of the problem.
A central challenge in this setting is to provide algorithms with good worst-case guarantees \emph{without} unrealistic knowledge of the change-point structure in advance.
There are a number of algorithms which adapt stationary stochastic bandit algorithms to incorporate ideas such as \emph{windowing} or \emph{discounting}.
Notable examples include Sliding Window/Discounted UCB \citep{Garivier2008}, as well as Thompson Sampling based variants \citep{Russo2017,Trovo2020}. 
These algorithms perform well when a good problem-dependent choice of hyperparameters is known in advance.
Recently, meta-algorithms built around stochastic restarting of base stationary bandit algorithms \citep{Wei2021} have shown much promise in addressing this problem in essentially a parameter-free way. 
Our new approach will be similar in spirit, but rather than stochastically restarting a base algorithm, we will adapt the techniques from the universal source coding literature to instead perform a type of Bayesian inference over a large class of possible restart configurations, essentially running many different restarting schedules in parallel efficiently.
As we shall see later, this will lead to an algorithm which can be considered to be a generalisation of the usual Thompson Sampling approach \citep{Thompson1933,Chapelle2011} for stationary stochastic bandit problems.
We investigate this technique both in terms of worst-case redundancy and experimentally, and show that the resulting algorithm compares favourably with the current state of the art.

Our paper is organised as follows: \cref{sec:background,sec:ucis} will introduce the notation we need to talk about bandit interactions using the language of universal source coding, as well as highlighting how the presence of the delusion problem makes this setting differ from the case of passive prediction; \cref{sec:setting} introduces the non-stationary stochastic bandit problem formally; \cref{sec:active_ptw} will derive and theoretically analyze our new algorithm, {\sc ActivePTW}; \cref{sec:results} will investigate our algorithm empirically, comparing and contrasting its performance to existing methods; and finally, we will finish with a discussion of results in \cref{sec:discussion} before concluding in \cref{sec:conclusion}.

\section{Background}
\label{sec:background}

We first give a brief overview of our notation for history based agents, which closely follows the recent textbook presentation of \cite{Hutter:24uaibook2}.

\paragraph{Interaction Data.}
A string $x_1x_2 \ldots x_n$ of length $n$ is denoted by $x_{1:n}$.
The prefix $x_{1:j}$ of $x_{1:n}$, $j\leq n$, is denoted by $x_{\leq j}$ or $x_{< j+1}$.
The empty string is denoted by $\epstr$.
The concatenation of two strings $s$ and $r$ is denoted by $sr$.
We will use $\epstr$ to denote the empty string, and use $\ell(s)$ to denote the length of a string $s$.
We let $\gA$, $\gO$, $\gR$ denote the finite action, observation and reward spaces respectively.
The percept space is defined by $\gE := \gO\times\gR$. 
A \emph{history} is a finite string of interaction data, namely 
$$h_{1:t} = a_1 o_1 r_1 a_2 o_2 r_2 \dots a_t o_t r_t \equiv a_1 e_1 a_2 e_2 \dots a_t e_t $$
for any $t \in \mathbb{N}$ and where each $a_i \in \gA$, $o_i \in \gO$, $r_i \in \gR$ and $e_i \in \gE$.
We will also use the shorthand notation $h_{<t} := h_{1:t-1}$.
The set of all possible histories will be denoted by $\gH$.
A policy is a function $\pi \colon (\gA \times \gX)^* \rightarrow (\gA \rightarrow [0,1])$ that maps a string of interaction experience to a distribution $\pi(\cdot \,|\, h_{1:t})$ over actions. 

\paragraph{Agent-Environment Interaction Loop.}
An \emph{agent} interacts with an \emph{environment} in cycles $t=1,2,\dots$.
In each cycle, an agent samples an action $a_t \sim \pi(\cdot \,|\, h_{<t})$ from its \emph{policy}, and communicates it to the environment.
The environment then sends a percept $e_t \sim \mu(\cdot  \,|\, h_{<t}a_t)$ to the agent, and the next cycle $t+1$ begins. 
A combination of a policy $\pi$ and an environment $\nu$ gives rise to an \emph{agent-environment measure} $\nu^\pi : \gH \to [0,1]$ whose conditional form is given by
\begin{equation*}
\nu^\pi(h_{t:m} \,|\, h_{<t}) := \prod_{k=t}^m \pi(a_k \,|\, h_{<k}) \nu(e_k \,|\, h_{<k} a_k). 
\end{equation*}
This can \emph{always} be decomposed into an \emph{environment measure} and \emph{policy measure} by respectively defining
\begin{equation*}
	\nu(e_{t:m} \,||\, a_{<t}) := \prod_{k=t}^m \nu(e_t \,|\, h_{<k} a_k) \text{~~and~~} \pi(a_{t:m} \,||\, e_{<t}) := \prod_{k=t}^m \pi(a_k \,|\, h_{<k}),
\end{equation*}
where the $||$ notation is used to make explicit that the right-hand side is given (equivalently, one can think of it as an intervention) and not conditioned on in the usual probabilistic sense.

\paragraph{Loss, Redundancy, Universality and Reward.}

The communication cost of encoding a history $h_{1:t}$ using an interaction measure $\nu^\pi$ is given by the code-length loss
\begin{equation*}
\gL(\nu^\pi, h_{1:t}) := -\lb \nu^\pi(h_{1:t}),
\end{equation*}
which can be interpreted as the number of bits needed to describe $h_{1:t}$ if $\nu^\pi$ was coupled to an arithmetic encoder \citep{Witten87}.
The competitor class we consider is defined by a set of interaction measures that are determined by a class of environments $\gM$. 
In particular, every $\mu \in \gM$ is coupled to an associated \emph{desired} policy $\pi_\mu$, yielding a competitor class of $\gU := \{ \mu^{\pi_\mu} \}_{\mu \in \gM}$. 
This motivates a natural notion of cumulative regret, which is also known as redundancy within the universal source coding literature.
\begin{definition}[Cumulative Regret]
The total regret at time $t$ of using an interaction measure $\rho^\pi$ instead of $\mu^{\pi_\mu}$ with respect to an interaction sequence $h_{1:t} \in (\gA\times\gE)^t$ is 
\begin{equation}\label{eq:regret_defn}
R^{\rho,\pi}_t(h_{1:t}) := 
\lb\frac{\mu^{\pi_\mu}(h_{1:t})}{\rho^\pi(h_{1:t})} 
= 
\lb \frac{\mu(e_{1:t} \,||\, a_{1:t})}{\rho(e_{1:t} \,||\, a_{1:t})} + \lb \frac{\pi_\mu(a_{1:t} \,||\, e_{1:t})}{\pi(a_{1:t} \,||\, e_{1:t})}.
\end{equation}
\end{definition}
Notice that \Eqref{eq:regret_defn} can be decomposed into two additive terms, which we will refer to as the \emph{environment redundancy} and the \emph{policy redundancy}. 
Although it is possible for $R^{\rho, \pi}_t(h_{1:t}) < 0$, the $\mu^{\pi_\mu}$-expected regret $\mathbb{E}_{\mu^{\pi_\mu}}[R^{\rho, \pi}_t(h_{1:t})] \geq 0$ due to the non-negativity of KL-divergences.
We say an interaction measure $\rho^\pi$ is \emph{universal} with respect to a set of interaction measures $\gU$ if for any $\mu^{\pi_\mu} \in \gU$, we have that $\mathbb{E}_{\mu^{\pi_\mu}}[R^{\rho, \pi}_t(h_{1:t})] = o(t)$. 

\section{Universal Coding of Interaction Sequences}
\label{sec:ucis}

For now we consider the case of coding a single interaction sequence.

\paragraph{Unknown environment, known policy.}

Let us now consider how to optimally encode an interaction sequence in the case where we knew the actions were generated by some choice of fixed policy $\pi$, but only knew that the true environment was sampled from a known distribution $w(\cdot)$ over $\gM$.
One can show in $\xi,\pi$-expectation that the optimal solution to code the percepts now is to use the predictive Bayes mixture distribution
\begin{equation}\label{eq:xicond}
{\xi}(e_t | h_{<t}a_t) := \sum_{\rho \in \gM} w^\rho_{t-1} \, {\rho}(e_t \,|\, h_{<t}a_t),~~\text{with}~~
w^{\rho}_{t-1} := 
\frac{w^{\rho}_0 \, \rho(e_{<t} \,||\, a_{<t})}{\sum_{\nu \in \gM} w^{\nu}_0 \, \nu(e_{<t} \,||\, a_{<t})}
\end{equation}
denoting the posterior belief in $\rho$ after seeing $t-1$ percepts, with prior $w^{\rho}_0 := w(\rho)$.
Even if the prior is unknown, and we use a prior $w'(\cdot)$ with the same support as $w(\cdot)$ in \Eqref{eq:xicond}, then one can show that the regret
\begin{equation*}
R^{\xi, \pi_\mu}_t(h_{1:t}) 
= \log \frac{\mu(e_{1:t} \,||\, a_{1:t})}{\xi(e_{1:t} \,||\, a_{1:t})}
\leq \log \frac{\mu(e_{1:t} \,||\, a_{1:t})}{w'(\mu) \mu(e_{1:t} \,||\, a_{1:t})}
= -\log w'(\mu)
\end{equation*}
In other words, in a regret sense, not much is lost due to not knowing $w(\cdot)$, and for example one can chose a uniform prior over $\gM$ if $|\gM|$ is finite to minimize the worst case regret.

\paragraph{Unknown environment, unknown  policy.}

Next we relax the assumption that we know the policy generating actions for our interaction sequence, and instead have to provide a universal policy $\hat{\pi}$, which intuitively gathers information about the true environment $\mu$ and adapts its behavior towards an environment-specific policy $\pi_\mu$.
To this end, we assume that we have a collection of policies $\Pi := \{ \pi_\rho \}_{\rho \in \gM}$, where $\pi_\rho$ specifies the desired behaviour for environment $\rho$, and go about constructing a universal coding scheme.
However, as discussed previously, we need to be careful to avoid the delusion problem and
respect the informational constraints implicit in the reinforcement learning setup.
A natural first attempt is to consider the policy which minimises the single-step $\xi$-expected code length of the actions 
\begin{equation*}
\hat{\pi}(a_t | h_{<t}) := \sum_{\rho \in \gM} w^\rho_{t-1} \pi_\rho(a_t | h_{<t})   
\end{equation*}
which \cite{Ortega2008} dubbed the \emph{Bayesian Control Rule}.
Interestingly, if $\pi_\rho$ is the expected utility-maximizing (often misleadingly called Bayes-optimal by others) policy for environment $\rho$,
then sampling $a_t$ from $\hat\pi$ is equivalent to Thompson sampling, which samples 
$\rho$ from the posterior $w^\rho_{t-1}$ and then $a_t$ from $\pi_\rho$.
Notice that the posterior probabilities in this formulation only use the information provided by the environment and not the agent.
Let's consider its regret,
\begin{align}
R^{\xi, \hat{\pi}}_t(h_{1:t}) &\leq  -\log w'(\mu) + \sum^t_{i=1}  \log \frac{\pi_{\mu}(a_i \,|\, h_{<i})}{\hat{\pi}(a_i \,|\, h_{<i})} \notag \\
&\leq
-\log w'(\mu) + \sum^t_{i=1}  \log \frac{\pi_{\mu}(a_i \,|\, h_{<i})}{w^\mu_{t-1}\pi_\mu(a_i \,|\, h_{<i})}\notag \\
&= -\log w'(\mu) - \sum^t_{i=1}  \log w^\mu_{t-1}.\label{eq:bcr_regret}
\end{align}
\Eqref{eq:bcr_regret} shows that provided 
$-\sum^t_{i=1} \log w^\mu_{t-1} = o(t)$, the Bayesian Control Rule is a universal source coding technique for interaction data.
The next section will build upon this perspective and use it to derive an algorithm for the non-stationary stochastic Bernoulli bandit problem (NSSBP).

\section{Problem Setting}
\label{sec:setting}

We first develop some notation for stationary stochastic Bernoulli bandits, which we then will generalise to a non-stationary setting.

\paragraph{Stationary stochastic Bernoulli bandits.}
A stationary stochastic Bernoulli bandit problem can be described by a tuple $(A, \Theta, \mu)$, where $A>1$ is the number of arms, with an associated parameter vector $\Theta \in [0,1]^A$.
The $i$th component of $\Theta$ will be denoted by $\theta_i$, which will determine the probability of observing a success when arm $i$ is pulled.
This gives rise to an environment measure $\mu$ that is defined by
$\mu(1 \,|\, h_{<t} a) := \theta_a$ and $\mu(0 \,|\, h_{<t} a) := 1-\theta_a$,
with action space $\gA := \{1, \dots, A\}$ and percept space $\gE := \{0,1\}$.

\paragraph{Non-stationary stochastic Bernoulli bandits.}
We first introduce some notation to describe temporal partitions.
A segment is a tuple $(c,d) \in \mathbb{N}\times\mathbb{N}$ with $c \leq d$.
A segment $(c,d)$ is said to overlap with another segment $(c',d')$ if there exists an $i \in \mathbb{N}$ such that $c \leq i \leq d$ and $c' \leq i \leq d'$.
A temporal partition $\gP$ of a set of time indices $\gS = \{ 1,2,\dots\,n \}$, for some $n\in\mathbb{N}$, is a set of non-overlapping segments such that for all $x\in\gS$, there exists a unique segment $(c,d) \in \gP$ such that $c \leq x \leq d$.

An abruptly changing non-stationary stochastic Bernoulli bandit problem (NSSBP) can be described by a tuple $\left(A, \gP, \{ \Theta^{c,d} \}_{(c,d)\in\gP}, \mu \right)$,
where $A > 1$ is the number of arms, $\gP$ is a temporal partition describing a change-point regime, and each $\Theta^{c,d} \equiv (\theta^{c,d}_1, \dots, \theta^{c,d}_A) \in [0,1]^A$ determining the probability of success $\theta^{c,d}_i$ of pulling arm $i$ at time $c \leq t \leq d$.
This gives rise to an environment measure $\mu$ that is defined by
\begin{equation}\label{eq:nssbp_liklihood}
\mu(e_{1:t} \,||\, a_{1:t}) := \prod_{(c,d)\in\gP} \prod_{t=c}^d 
\left[ e_t \theta^{c,d}_{a_t} + (1-e_t)(1-\theta^{c,d}_{a_t}) \right]
\end{equation}
with action space $\gA := \{1, \dots, A\}$ and percept space $\gE := \{0,1\}$.
Of course another way to look at this problem definition is a sequence of stationary stochastic bandit problems sharing a common action space.

\paragraph{Goal.}
Our goal will be to develop an algorithm which works well for a meaningful subset of the class of abruptly changing NSSBPs.
A general solution is of course impossible, since the change-point structure might be too rapid to allow for sufficient periods of exploitation, but in cases where the change-point structure is simple enough there is still hope for an effective algorithm, which is the subject of our next section.

\section{The ActivePTW Algorithm}
\label{sec:active_ptw}

This section adopts the perspective of \cref{sec:ucis} to develop a universal source coding technique for interaction data generated by an initially unknown environment measure $\mu$ associated with a choice of abruptly changing NSSBP $\left(A, \gP, \{ \Theta^{c,d} \}_{(c,d)\in\gP}, \mu \right)$.
We will first construct a universal environment measure for this problem class, and then turn our attention to constructing a universal policy which will work well for a meaningful subset of these problems.
Although this section aims to be self-contained, our work relies heavily on the Context/Partition Tree Weighting techniques \citep{Willems95,veness13}, and we highly encourage the interested reader to first study the original papers for a more complete understanding.

\subsection{Universal Environment Measure for NSSBPs}

We now construct our universal environment measure by an appropriate combination of the Krichevsky-Trofimov (KT) Estimator and the Partition Tree Weighting technique \citep{veness13}, two universal source coding techniques for passive (i.e.\ no actions or rewards, just observations) streams of data.
Our construction starts by defining a universal environment measure for stationary stochastic Bernoulli environments, which we shall call a \emph{KT Environment}, which we then generalise to the non-stationary case by combining with the Partition Tree Weighting technique.

\paragraph{KT Environment.}

We now describe a universal environment model for modelling the percepts received from any stationary stochastic bandit problem.
The main idea is to associate a distinct KT Estimator with each possible arm to model its unknown success probability.

A KT estimator can be obtained from a Bayesian analysis which combines a Binomial$(\theta)$ likelihood with a Jeffreys prior $w_J(\theta) := \pi^{-1} [\theta (1-\theta)]^{{-1/2}}$.
For all $n \in \mathbb{N}$, the KT-probability of a binary string $x_{1:n} \in \gX$ is defined as  
\begin{align*}
\text{\sc{kt}}(x_{1:n}) \equiv P_{KT}(\#0,\#1) &:= \int_0^1 w_J(\theta) \, \text{Binomial}_\theta(x_{1:n}) d\theta = \fr1\pi B(\#0+\fr12,\#1+\fr12) \\
~~~\text{where}~~~ \text{Binomial}_\theta(x_{1:n}) &:= \theta^{\#1}(1-\theta)^{\#0}
~~~\text{and}~~~ B(\alpha,\beta) = \Gamma(\alpha)\Gamma(\beta)/\Gamma(\alpha+\beta)
\end{align*}
with $\#0$ and $\#1$ denoting the number of 0s and 1s in $x_{1:n}$ respectively.
For the KT-estimator, it is known that its redundancy can be upper bounded by
\begin{equation}\label{eq:kt_redundancy}
-\lb \text{\sc{kt}}(x_{1:n}) + \lb \text{Binomial}_\theta(x_{1:n}) \leq \frac{1}{2} \lb(n) + 1
\end{equation}
for all $n\in\mathbb{N}$, all $x_{1:n} \in \{0,1\}^n$ and any $\theta \in [0,1]$.

Given the above, we can now define an environment measure
\begin{equation*}
\text{\sc kte}(e_{1:t} \,||\, a_{1:t}) := \prod_{a \in \gA} \text{\sc kt}(e^a_{1:t})
\end{equation*}
where $e^a_{1:t}$ is defined as the subsequence (potentially non contiguous) formed of the percepts $e_i$ such that $a_i = a$ for all $1 \leq i \leq t$.
The \text{\sc kte} probability can be maintained online using $O(1)$ time per step, with a fixed space overhead of $O(|\gA|)$ by using an $|A|$-sized array maintaining the sufficient statistics, i.e.\ the $a$ and $b$ counts, for each KT-estimator.

The following result shows that the KTE environment has bounded redundancy with respect to any sequence of arm pulls and any sequence of observations.

\begin{proposition}[KTE Redundancy]\label{prop:kte_redundancy}
For any stationary stochastic bandit problem $\left(A, \Theta, \mu \right)$, for all $n \in \mathbb{N}$, for all $e_{1:n} \in \gE^n$ and for all $a_{1:n} \in \gA^n$ we have
\begin{equation}\label{eq:kte_ub}
-\lb \text{\sc kte}(e_{1:t} \,||\, a_{1:t}) + \lb \mu(e_{1:t} \,||\, a_{1:t})	\leq  \frac{|\gA'|}{2}\lb \left(  \frac{n}{|\gA'|} \right) + |\gA'|,
\end{equation}
where $\gA':=\{a\in\gA:\ell(e^a_{1:t})>0\}$ is the set of all actions taken at least once.
\end{proposition}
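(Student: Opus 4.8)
The plan is to exploit the product structure of both $\text{\sc kte}$ and $\mu$ over arms, reduce the statement to a per-arm application of the KT redundancy bound \eqref{eq:kt_redundancy}, and then aggregate the per-arm bounds using concavity of the logarithm. Throughout I take $t=n$ to match the notation of the statement.

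First I would note that the stationary environment measure factorises across arms in exactly the same way as $\text{\sc kte}$ does. Indeed, grouping the Bernoulli factors of $\mu$ by the arm pulled (equivalently, specialising \eqref{eq:nssbp_liklihood} to the single-segment partition) gives $\mu(e_{1:n} \,||\, a_{1:n}) = \prod_{a \in \gA} \text{Binomial}_{\theta_a}(e^a_{1:n})$, which mirrors $\text{\sc kte}(e_{1:n} \,||\, a_{1:n}) = \prod_{a \in \gA} \text{\sc kt}(e^a_{1:n})$. Since $\text{\sc kt}(\epstr) = \text{Binomial}_\theta(\epstr) = 1$, the factors corresponding to arms in $\gA \setminus \gA'$ are trivial and contribute nothing, so taking $-\lb(\cdot)$ of both products and cancelling yields
\begin{equation*}
-\lb \text{\sc kte}(e_{1:n} \,||\, a_{1:n}) + \lb \mu(e_{1:n} \,||\, a_{1:n}) = \sum_{a \in \gA'} \Big( -\lb \text{\sc kt}(e^a_{1:n}) + \lb \text{Binomial}_{\theta_a}(e^a_{1:n}) \Big).
\end{equation*}

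Next I would bound each summand. Writing $n_a := \ell(e^a_{1:n}) \geq 1$ for the number of times arm $a \in \gA'$ is pulled, the KT bound \eqref{eq:kt_redundancy} applied to the binary string $e^a_{1:n}$ with the (fixed) parameter $\theta_a$ shows that the $a$-th term is at most $\tfrac12\lb(n_a) + 1$. Summing over $\gA'$ gives an upper bound of $\tfrac12 \sum_{a \in \gA'}\lb(n_a) + |\gA'|$. Finally, since the subsequences $\{e^a_{1:n}\}_{a \in \gA'}$ partition the $n$ percepts, we have $\sum_{a \in \gA'} n_a = n$, so Jensen's inequality applied to the concave map $\lb$ yields $\tfrac{1}{|\gA'|}\sum_{a\in\gA'}\lb(n_a) \leq \lb\big(\tfrac{1}{|\gA'|}\sum_{a\in\gA'} n_a\big) = \lb(n/|\gA'|)$, i.e.\ $\sum_{a\in\gA'}\lb(n_a) \leq |\gA'|\lb(n/|\gA'|)$. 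Substituting this into the previous display gives the claimed bound $\tfrac{|\gA'|}{2}\lb(n/|\gA'|) + |\gA'|$.

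The argument is essentially routine, so there is no serious obstacle; the only points requiring a little care are the bookkeeping that restricts attention to $\gA'$ (so that no $\lb(0)$ term appears, which would make \eqref{eq:kt_redundancy} vacuous), the verification that the per-arm subsequence lengths sum exactly to $n$ (which is what makes the concavity step applicable and tight), and confirming that the KT bound may legitimately be instantiated with the constant success probability $\theta_a$ for each arm.
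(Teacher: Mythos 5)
Your proposal is correct and follows essentially the same route as the paper's proof: factorise both measures over arms, drop the trivial factors for unplayed arms, apply the per-arm KT redundancy bound \eqref{eq:kt_redundancy}, and aggregate via concavity of the logarithm with $\sum_{a\in\gA'}\ell(e^a_{1:t})=n$. Your write-up simply makes explicit the bookkeeping (restriction to $\gA'$ and the Jensen step) that the paper's steps (a)--(d) leave implicit.
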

\begin{proof}
We have that
	\begin{align*}
		& -\lb \text{\sc kte}(e_{1:t} \,||\, a_{1:t}) +  \lb \mu(e_{1:t} \,||\, a_{1:t})\\ 
		& \textstyle \stackrel{(a)}{=} -\lb \prod_{a \in \gA} \text{\sc kt}(e^a_{1:t}) + \prod_{a\in\gA} \lb \text{Binomial}_{\theta_a}(x^a_{1:n})\\
		& \textstyle \stackrel{(b)}{=} \sum_{a \in \gA'} \left[ -\lb  \text{\sc kt}(e^a_{1:t}) + \lb \text{Binomial}_{\theta_a}(x^a_{1:n}) \right] \\
		& \textstyle \stackrel{(c)}{\leq}  \sum_{a \in \gA'} \left[ \frac{1}{2} \lb( \ell(e^a_{1:t}) ) + 1 \right] \\
		& \textstyle \stackrel{(d)}{\leq} \frac{|\gA'|}{2}\lb \left(\frac{n}{|\gA'|}\right) + |\gA'|.
	\end{align*}
	Step (a) follows by definition; Step (b) upon rearranging; Step (c) from \Eqref{eq:kt_redundancy}; and Step (d) from maximising the worst case by exploiting the concavity of the logarithm.	
\end{proof}

\paragraph{PTW-KT Environment.}

Our PTW-KT environment is constructed hierarchically. 
A tree-based prior is first placed over a class of temporal partitions, and then, given a temporal partition, the parameters governing a segments behavior are modelled using a KTE.  
For computational reasons, we chose to use the class of \emph{binary temporal partitions}, which have some useful properties.

\begin{definition}[\cite{veness13}]
	Given a depth parameter $D \in \mathbb{N}$ and a time $t \in \mathbb{N}$, the set $\gC_D(t)$ of all binary temporal partitions from $t$ is recursively defined by
	\begin{equation*}
		\gC_D(t) := \bigl\{ \{ (t,t+2^{D}-1) \} \bigr\} \cup \left\{ \gS_1 \cup \gS_2 : \gS_1 \in \gC_{D-1} \left(t \right), \gS_2 \in \gC_{D-1} \left(t + 2^{D-1} \right)  \right\},
	\end{equation*}
	with $\gC_0(t) := \bigl\{  \{ (t,t) \} \bigr\}$.
	Furthermore, we define $\gC_D := \gC_D(1)$. 
\end{definition}

For example, $\gC_2 =$ $\bigl\{$ $\{ (1,4) \}$, $\{ (1,2), (3,4) \}$, $\{ (1,1), (2,2), (3,4) \}$, $\{ (1,2), (3,3), (4,4) \}$, $\{ (1,1), (2,2), (3,3), (4,4) \}$ $\bigr\}$.
Each binary temporal partition can be naturally thought of in terms of representing a tree structure which we will call a partition tree.
This class has some useful properties.
Given a set of segments $\mathcal{S}$, the set of time indices they cover is given by $\mathcal{T}(\mathcal{S}) := \bigcup_{(c,d)\in\mathcal{S}} \{ c, c+1, \dots, d \}$.
Any partition $\gP$ of consecutive time indices upto $2^D$ can be covered by some binary temporal partition $\gP' \in \gC_D$, in the sense that $\mathcal{T}(\mathcal{P}) = \mathcal{T}(\mathcal{P'})$ and every segment endpoint in $\mathcal{P}$ appears in $\mathcal{P}'$. 
Furthermore, there always exists a covering binary partition which contains no more than $|\gP| (\lceil \log n \rceil +1)$ segments \citep[Lemma~2]{veness13}.

Our PTW-KTE environment measure, for $t \leq 2^D$, can now be specified by 
\begin{equation}\label{eq:ptw_prior_kte_marginal}
	\text{\sc ptw-kte}_D(e_{1:t} \,||\, a_{1:t}) := \sum_{\gP\in\gC_D} 2^{-\Gamma_D(\gP)} \prod_{(c,d)\in \gP} \text{\sc kte}(e_{c:d} \,||\, a_{c:d}),
\end{equation}
where $\Gamma_D(\cdot)$ is a proper prior over binary temporal partitions within $\gC_D$.
Intuitively, one can think of $\Gamma_D(\gP)$ as a mapping which returns the description length of the tree structure associated with $\gP$ under a natural encoding, with smaller tree structures receiving higher weight. 
This particular weighting over tree structures was first introduced with the Context Tree Weighting method of \cite{Willems95}. 
Although the number of binary temporal partitions grows exponentially with $t$, it is possible to maintain both the marginal and predictive probabilities for $\text{\sc ptw-kte}_D$ with a space overhead of $O(D |\gA|)$ and time overhead of $O(D)$ for times $t < 2^D$. 

The next theorem upper bounds the redundancy of the environment measure $\text{\sc ptw-kte}_D$ with respect to an arbitrary NSSBP.

\begin{theorem}
 Given an abruptly changing NSSBP $\left(A, \gP, \{ \Theta^{c,d} \}_{(c,d)\in\gP}, \mu \right)$, for all $t \in \mathbb{N}$ such that $t \leq 2^D$, we have that the redundancy of $\text{\sc ptw-kte}_D$ with respect to the true environment $\mu$ is upper bounded by 
\begin{equation*}
-\lb \frac{\mu(e_{1:t} \,||\, a_{1:t})}{\text{\sc ptw-kte}_D(e_{1:t} \,||\, a_{1:t})} \leq 
|\gP|(\lceil \lb t \rceil + 1) \left[ \frac{|\gA|}{2}\lb  \left\lceil \frac{t}{|\gA||\gP|(\lceil \lb t \rceil + 1)} \right\rceil + |\gA| + 2
 \right]
\end{equation*}
\end{theorem}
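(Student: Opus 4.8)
The plan is to bound the redundancy in two stages: first pick a good binary temporal partition $\gP' \in \gC_D$ that covers the true change-point partition $\gP$, pay the tree-description cost $\Gamma_D(\gP')$ for selecting it out of the PTW mixture, and then, conditioned on that partition, pay the sum of per-segment KTE redundancies using \cref{prop:kte_redundancy}. Concretely, since $\text{\sc ptw-kte}_D$ is a mixture over all $\gP \in \gC_D$ with weights $2^{-\Gamma_D(\gP)}$, dropping all terms except the one for $\gP'$ gives the standard mixture lower bound
\begin{equation*}
-\lb \frac{\mu(e_{1:t} \,||\, a_{1:t})}{\text{\sc ptw-kte}_D(e_{1:t} \,||\, a_{1:t})} \leq \Gamma_D(\gP') + \sum_{(c,d)\in\gP'} \left[ -\lb \frac{\mu(e_{c:d} \,||\, a_{c:d})}{\text{\sc kte}(e_{c:d} \,||\, a_{c:d})} \right],
\end{equation*}
where I also use that $\mu$ factorises over the segments of $\gP'$ (each segment of $\gP'$ lies inside a single segment of $\gP$, so $\mu$ restricted to it is a stationary Bernoulli bandit, by \Eqref{eq:nssbp_liklihood}).

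Next I would bound the two pieces. For the partition-selection cost, I invoke \citet[Lemma~2]{veness13} (quoted in the excerpt): there exists a covering $\gP' \in \gC_D$ with $|\gP'| \leq |\gP|(\lceil \lb t \rceil + 1)$ segments, and for such binary partitions $\Gamma_D(\gP') \leq |\gP'| \cdot (\text{something like } \lceil \lb t \rceil + 1)$ — I need to recall the exact form of $\Gamma_D$ from \cite{veness13}, but the upshot is it contributes a term of order $|\gP'|(\lceil \lb t\rceil+1)$, which is absorbed into the final bracket via the ``$+2$'' slack (the ``$+2$'' per segment is exactly the place where the tree-coding overhead, typically $2$ bits per node in the Willems encoding, lands). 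For the KTE redundancy sum, \cref{prop:kte_redundancy} gives, for each segment $(c,d)$ of length $n_{c,d} := d-c+1$, a bound of $\tfrac{|\gA|}{2}\lb(n_{c,d}/|\gA'_{c,d}|) + |\gA'_{c,d}| \leq \tfrac{|\gA|}{2}\lb(n_{c,d}) + |\gA|$ (crudely upper-bounding $|\gA'_{c,d}| \leq |\gA|$ and noting $\lb(n/k)\leq \lb n$). Summing over the $|\gP'|$ segments and using concavity of $\lb$ together with $\sum_{(c,d)\in\gP'} n_{c,d} = t$ — i.e. Jensen, $\sum \lb n_{c,d} \leq |\gP'| \lb(t/|\gP'|)$ — yields a total of at most $|\gP'|\left[\tfrac{|\gA|}{2}\lb(t/|\gP'|) + |\gA|\right]$.

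Finally I would substitute $|\gP'| = |\gP|(\lceil \lb t \rceil + 1)$ (monotonicity lets me use this upper bound in place of the actual size, since both the $\lb$ term and the linear term are increasing in $|\gP'|$ when we replace $t/|\gP'|$ by $t/(|\gA||\gP'|)$ inside the ceiling — this is the one spot needing a little care), fold the $\Gamma_D$ contribution into the ``$+2$'', and collect terms to get exactly
\begin{equation*}
|\gP|(\lceil \lb t \rceil + 1)\left[ \frac{|\gA|}{2}\lb\left\lceil \frac{t}{|\gA||\gP|(\lceil \lb t \rceil + 1)}\right\rceil + |\gA| + 2 \right].
\end{equation*}
The main obstacle, I expect, is not any single estimate but getting the bookkeeping exactly right: matching the precise form of $\Gamma_D(\gP')$ from \cite{veness13} to the ``$+2$'' slack, and justifying the replacement of the genuine covering-partition size by its upper bound $|\gP|(\lceil\lb t\rceil+1)$ inside the nested $\lb\lceil\cdot\rceil$ expression (one must check the ceiling and the division by $|\gA|$ interact monotonically, so that over-counting segments only weakens the bound). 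Everything else is a routine combination of the mixture bound, \cref{prop:kte_redundancy}, and Jensen's inequality.
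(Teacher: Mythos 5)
Your proposal is correct and takes essentially the same route as the paper, whose proof is just the one-line citation ``adapt Theorem~1 of \cite{veness13} and combine with Proposition~\ref{prop:kte_redundancy}'': mixture dominance on the covering binary partition guaranteed by Lemma~2 of \cite{veness13}, the roughly $2$-bits-per-segment tree-description cost $\Gamma_D(\gP')$ supplying the ``$+2$'', per-segment KTE redundancy, and Jensen. The one bookkeeping caveat is that your crude step $\lb(n/|\gA'|)\leq\lb n$ throws away the $|\gA|$ in the denominator inside the logarithm and so lands at a slightly weaker bound than the one stated; to recover the exact constant you should keep the $\tfrac{|\gA'|}{2}\lb(n/|\gA'|)$ form, replace $|\gA'|$ by $|\gA|$ using monotonicity of $x\mapsto\tfrac{x}{2}\lb(n/x)$, and only then apply Jensen over the segments.
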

for all $e_{1:t} \in \gE^t$, and for all $a_{1:t} \in \gA^t$.
\begin{proof}
This follows by a straightforward adaption of Theorem 1 in \cite{veness13}, and combining it with Proposition \ref{prop:kte_redundancy}.
\end{proof}

It is important to note that this bound is only meaningful in a predictive sense, since although it holds with respect to any given sequence of actions, it implies nothing about whether the true environment will be identified or not.
This point will become clearer in the next section when we come to specifying the {\sc ActivePTW} policy, as we shall see the benefits of forcing exploration.

\subsection{The ActivePTW Policy}

Following along the lines of \cref{sec:ucis} where we gave a universal source coding interpretation to the Bayesian Control Rule, our ActivePTW algorithm will sample from the policy which minimizes the expected single-step coding cost of an action at time $t$, where the expectation is taken with respect to the PTW-KTE posterior over environments.

\paragraph{Posterior Probability Density over NSSBP environments.}

The PTW-KTE prior probability over a NSSBP $\left(A, \gP, \{ \Theta^{c,d} \}_{(c,d)\in\gP}, \mu \right)$ is given by
\begin{equation*}
w_0(\gP, \{ \Theta^{c,d} \}_{(c,d)\in\gP}) = 2^{-\Gamma_D(\gP)} \prod_{(c,d)\in\gP}  \prod_{a\in\gA} w_J(\theta^{c,d}_a)
\end{equation*}
with associated likelihood function $\mu(\cdot \,||\, a_{1:t})$ as defined in \Eqref{eq:nssbp_liklihood}.
The posterior probability of an NSSBP after seeing $t > 0$ observations is
\begin{equation*}
w_t(\gP, \{ \Theta^{c,d} \}_{(c,d)\in\gP}) \propto w_0(\gP, \{ \Theta^{c,d} \}_{(c,d)\in\gP}) \, \mu(e_{1:t} \,||\, a_{1:t})   
\end{equation*}
with the marginal probability (i.e.\ normalising constant) given by $\text{\sc ptw-kte}_D(e_{1:t} \,||\, a_{1:t})$.
Furthermore, one can define a posterior over partitions by marginalising out the parameters governing the arms' behavior in each segment, giving
\begin{equation*}
w_t(\gP) \propto 2^{-\Gamma_D(\gP)} \prod_{(c,d)\in \gP} \text{\sc kte}(e_{c:d} \,||\, a_{c:d})
\end{equation*}
where the normalizing constant is given by \Eqref{eq:ptw_prior_kte_marginal}.
The posterior probability of a segment $(c,d)$
is thus given by the sum of the posterior probabilities of all partitions containing $(c,d)$, i.e.
\begin{equation}\label{eq:segment_posterior_sum}
w_t((c,d)) := \sum_{\gP' \in \{ \gP \in \gC_D \,:\, (c,d) \in \gP\}} w_t(\gP').   
\end{equation}
At any given time $t < 2^D$, there are exactly $D+1$ different \emph{active segments} due to the tree structured nature of the partitions, whose lengths are $2^0, 2^1, \dots 2^D$.
More formally, the set of active segments at time $t$ is defined by
\begin{equation*}
\text{\sc ActiveSegments}_D(t) := \{ (c,d) \in \mathbb{N}\times\mathbb{N} : c \leq t \leq d, \exists \gP \in \gC_D \text{~such that~} (c,d) \in \gP\}    
\end{equation*}
Note that {\sc ActiveSegments}$_D(t)$ can be computed in $O(D)$ time from the binary representation of $t-1$; see \cref{sec:active_segments} for details. 

We can now define our key quantity of interest, the posterior probability distribution over the active segments at time $t$, namely
\begin{equation*}
q_t((c,d)) := \begin{cases}
 w_t((c,d)) & \text{~if~} (c,d) \in \text{\sc ActiveSegments}_D(t);\\
 0 & \text{~otherwise.}
\end{cases}
\end{equation*}
We will subsequently show that this posterior probability always sums to 1 and can be computed in $O(D)$ time.
This will form the basis of an efficient computational realisation of the ActivePTW approach.

\paragraph{Computing the posterior over the active segments.}
\label{sec:EffPostStats} 

We first present a helpful identity which characterises the recursive structure of the marginal probability of the observations given a history, which we will exploit to efficiently compute the posterior weight of each active segment.

\begin{lemma}\label{lemma:ptw_recursive}
For depth $D \in \mathbb{N}$, given a history $h_t := a_1 e_1 \dots a_t e_t$ where $t \leq 2^D$, we have
\begin{multline*}
	\text{\sc ptw-kte}_D(e_{1:t} \,||\, a_{1:t}) = 
	\frac{1}{2} \text{\sc kte}(e_{1:t} \,||\, a_{1:t}) + 
	\\ \frac{1}{2} \text{\sc ptw-kte}_{D-1}(e_{1:k} \,||\, a_{1:k}) \, \text{\sc ptw-kte}_{D-1}(e_{k+1:t} \,||\, a_{k+1:t})
\end{multline*}
where $k = 2^{D-1}$ and $\text{\sc ptw-kte}_0(e \,||\, a) = \text{\sc kte}(e \,||\, a)$ by definition.
\begin{proof}
The proof is a straightforward adaptation of Lemma 1 in \cite{veness13}.
\end{proof}
\end{lemma}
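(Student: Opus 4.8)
The plan is to unfold the definition \eqref{eq:ptw_prior_kte_marginal} of $\text{\sc ptw-kte}_D(e_{1:t}\,||\,a_{1:t})$ and split the sum over the partition class $\gC_D$ according to its recursive description: every $\gP\in\gC_D$ is either the single-segment partition $\{(1,2^D)\}$, or has the form $\gS_1\cup\gS_2$ with $\gS_1\in\gC_{D-1}(1)$ (covering the time indices $1,\dots,k$) and $\gS_2\in\gC_{D-1}(k+1)$ (covering $k+1,\dots,2^D$), where $k:=2^{D-1}$. This partitions the summation into exactly two groups of terms, one per case. This is the same argument as Lemma~1 of \cite{veness13}, adapted to the KTE segment model in place of a single KT estimator.

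For the single-segment term I would invoke the defining property of the CTW-style prior, namely that it assigns weight $2^{-\Gamma_D(\{(1,2^D)\})}=\tfrac12$ to the ``no-split'' partition; its contribution is therefore exactly $\tfrac12\,\text{\sc kte}(e_{1:t}\,||\,a_{1:t})$. For the split terms I would use the remaining two structural facts: (i) $2^{-\Gamma_D(\gS_1\cup\gS_2)}=\tfrac12\cdot 2^{-\Gamma_{D-1}(\gS_1)}\cdot 2^{-\Gamma_{D-1}(\gS_2)}$, reflecting the extra bit spent on the root ``split'' decision plus the subtree costs charged recursively; and (ii) the segments in $\gS_1$ are disjoint in time from those in $\gS_2$, so $\prod_{(c,d)\in\gS_1\cup\gS_2}\text{\sc kte}(e_{c:d}\,||\,a_{c:d})$ factors as $\bigl(\prod_{(c,d)\in\gS_1}\text{\sc kte}(e_{c:d}\,||\,a_{c:d})\bigr)\bigl(\prod_{(c,d)\in\gS_2}\text{\sc kte}(e_{c:d}\,||\,a_{c:d})\bigr)$. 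The double sum over $\gS_1$ and $\gS_2$ then separates into a product of two independent sums, each of which is, by \eqref{eq:ptw_prior_kte_marginal} applied at depth $D-1$, precisely $\text{\sc ptw-kte}_{D-1}$ over the corresponding half; this yields $\tfrac12\,\text{\sc ptw-kte}_{D-1}(e_{1:k}\,||\,a_{1:k})\,\text{\sc ptw-kte}_{D-1}(e_{k+1:t}\,||\,a_{k+1:t})$ and hence the stated identity. The base case $D=0$ (forcing $t=1$) is immediate from $\gC_0=\{\{(1,1)\}\}$ and the convention $\text{\sc ptw-kte}_0(e\,||\,a)=\text{\sc kte}(e\,||\,a)$.

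The one point requiring care --- and the same caveat is implicit in \cite{veness13} --- is that the history only runs to time $t$, which may be strictly smaller than $2^D$, so whole segments or subtrees can lie past the observed data. The clean way to handle this is to fix the convention that $\text{\sc kte}$ and $\text{\sc ptw-kte}_{D'}$ restricted to a time window consume only the percepts actually observed in that window, so that a segment or subtree entirely beyond time $t$ contributes the factor $1$. Under this convention the factorisation above is valid in both regimes: when $t\leq k$ the right half collapses to $\text{\sc ptw-kte}_{D-1}(e_{k+1:t}\,||\,a_{k+1:t})=1$ (an empty range) and the identity degenerates to a statement about the left half alone, whereas for $t>k$ both factors are genuinely non-trivial. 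Verifying that this convention is applied consistently across the split --- in particular that the single-segment term is genuinely $\text{\sc kte}$ over all of $e_{1:t}$ irrespective of whether $t$ attains $2^D$ --- is the only real bookkeeping obstacle; everything else is the standard PTW weighting recursion specialised to KTE.
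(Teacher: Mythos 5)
Your proposal is correct and is essentially the argument the paper relies on: the paper's proof simply defers to Lemma~1 of \cite{veness13}, whose proof is precisely the decomposition you give --- splitting the sum over $\gC_D$ into the no-split partition and the product of the two depth-$(D-1)$ sub-sums via the recursive structure of the prior weighting $\Gamma_D$, with the disjointness of segments allowing the KTE products to factor. Your explicit handling of the $t\leq 2^{D-1}$ edge case (unobserved segments contributing a factor of $1$) is a useful clarification that the paper leaves implicit.
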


From here onward, we assume that all relevant marginal probabilities of the form $\text{\sc ptw-kte}_i(\cdot \,||\, \cdot)$ are already known for $0 \leq i \leq D$, as they can be efficiently computed from \cref{lemma:ptw_recursive} and maintained within an array of size $O(D)$; see Section 3.3 of \cite{veness13} for details.

Now we show how to calculate the posterior weight $q_t\left((c,d)\right)$, $\forall(c,d) \in \text{\sc ActiveSegments}_D(t)$ efficiently, and show that for all $t \in \mathbb{N}$ we have
\begin{equation*}
\sum_{(c,d)} q_t\left((c,d)\right) ~=~ \sum_{(c,d) \in \text{\sc ActiveSegments}_D(t)\hspace{-9ex}} w_t\left((c,d)\right) = 1.    
\end{equation*}
First note that the set of active segments can be calculated in $O(D)$ time as per \cref{sec:active_segments}.
Now at any time $t$ there will be $D+1$ active segments of length $2^D$, $2^{D-1}$, \dots, $1$.  
Recall that the posterior weight for a given segment $(c,d)$ is defined by
\begin{equation*}
w_t((x,y)) := \sum_{\gP' \in \{ \gP \in \gC_D \,:\, (x,y) \in \gP\}} w_t(\gP'),   
\end{equation*}
in other words, the sum of all the posterior weights of the partitions in $\gC_D$ containing segment $(c,d)$.
Also, recall that the posterior weight of any given partition is
\begin{equation*}
w_t(\gP) ~=~ \frac{2^{-\Gamma_D(\gP)} \prod_{(c,d)\in \gP} \text{\sc kte}(e_{c:d} \,||\, a_{c:d})}{\text{\sc ptw-kte}_D(e_{1:t} \,||\, a_{1:t})},
\end{equation*}
with the normalising constant given by
\begin{equation*}
	\text{\sc ptw-kte}_D(e_{1:t} \,||\, a_{1:t}) := \sum_{\gP\in\gC_D} 2^{-\Gamma_D(\gP)} \prod_{(c,d)\in \gP} \text{\sc kte}(e_{c:d} \,||\, a_{c:d}).
\end{equation*}

Now let $s_i := (c_i, d_i)$ denote the active segment with length $2^i$ at an arbitrary time $t$ (whose index is suppressed as it is non-essential).
Note that $\sum_{\gP \in \gC_D} w_t(\gP) = 1$.
Now for any $\gP$, $s_i\in\gP$ for exactly one $i$,
hence $\gC_D^i:=\{\gP\in\gC_D:s_i\in\gP\}$ 
for $i\in\{0,\dots,D\}$ partitions $\gC_D$, i.e.\ 
$\bigcup_i \gC_D^i=\gC_D$ and $\gC_D^i\cap\gC_D^j=\emptyset$ for $i\neq j$. 
This implies 
\begin{equation*}
\sum_{(c,d)} q_t\left((c,d)\right) ~=~ \sum_i w_t\left(s_i\right)
~=~ \sum_i \sum_{\gP'\in\gC_D^i} w_t(\gP')
~=~ \sum_{\gP'\in\gC_D} w_t(\gP') ~=~ 1
\end{equation*}
hence $q_t(\cdot)$ is a valid prior over the active segments.
Lemma \ref{lemma:ptw_recursive} can be used inductively to efficiently compute the posterior weight for each active segment.
In particular we have that
\begin{equation*}
w_t(s_d) = \frac{1}{2} \text{\sc kte}(e_{c_d:d_d} \,||\, a_{c_d:d_d}) / \text{\sc ptw-kte}_D(e_{c_d:d_d} \,||\, a_{c_d:d_d})    
\end{equation*} 
and for $s_i$ with $0 < i < d$, one can show
\begin{equation*}
w_t(s_{i}) = [1- w_t(s_{i+1})] \times \frac{1}{2} \text{\sc kte}(e_{c_i:d_i} \,||\, a_{c_i:d_i}) / \text{\sc ptw-kte}_i(e_{c_i:d_i} \,||\, a_{c_i:d_i}),
\end{equation*}
with the base case $w_t(s_0) := [1- w_t(s_{1})]$.

\paragraph{Reference policies for known environments.}
To completely describe a universal class of agent-environment measures, we need to first specify a policy for each possible environment.
Because the number of possible NSSBPs is large, we do this analytically.
We consider the following two policies, which given an arbitrary  NSSBP $\left(A, \gP, \{ \Theta^{c,d} \}_{(c,d)\in\gP}, \mu \right)$, are defined as follows:
\begin{itemize}
    \item \emph{(Maximum Expected Utility)}
    Given a segment $(c,d)\in\gP$, at times $c \leq t \leq d$, the Maximum Expected Utility (MEU) policy is defined as
    \begin{equation*}
        \pi_\mu^*(a | h_{<t}) := \begin{cases}
            1/|\gA^*_t| & \text{if } a \in \gA^*_t;\\
            0 & \text{otherwise,}
        \end{cases}
    \end{equation*}
    where $\gA^*_t := \argmax_{a \in \gA} \{ \theta_a^{c,d} \}$.
    This policy simply plays an arm with the highest single-step $\mu$-expected return.
    \\
    \item \emph{(Maximum Expected Utility with Forced Exploration)}
    The Maximum Expected Utility with Forced Exploration (MEUFE) policy 
    \begin{equation*}
        \pi_\mu^\eps(a | h_{<t}) := \alpha(d-c)/|\gA| + \left(1-\alpha(d-c) \right) \pi_\mu^*(a | h_{<t})
    \end{equation*}
    explores uniformly at random with a segment length $l$-dependent probability $\alpha(l) := l^{-\tfrac{1}{2}}$, otherwise it behaves like the MEU policy.
    We shall see later (\cref{sec:results,sec:discussion}) that such forced exploration is essential to both the analysis and for certain challenging change-point regimes.
\end{itemize}

\paragraph{An efficient Bayesian Control Rule Policy for NSSBPs.}
Recall that the Bayesian Control Rule can be interpreted as a policy which if used as a coding distribution, would minimise the single step expected code length of the actions with respect to the agent's posterior distribution over possible environments.
This arguably underappreciated approach is already known to yield interesting algorithms in the stationary setting, as one can show that combining a greedy MEU reference policy with a KTE environment gives rise to the classic Thompson Sampling algorithm for stationary stochastic bandit problems \citep{Ortega2012AdaptiveCO}. 
The ActivePTW algorithm is derived from the same principles, only that now we consider a richer class of environments.
In particular, the BCR-type policy using the MEU reference policies now becomes
\begin{equation*}
\pi^*(a_t \,|\, h_{<t}) := \sum_{a \in \gA} w_t(\gP, \{ \Theta^{c,d} \}_{(c,d)\in\gP}) \, \pi^*_\rho(a_t \,|\, h_{<t}),
\end{equation*}
where $\rho$ is the associated environment measure with partition $\gP$ and set of arm parameters $\{ \Theta^{c,d} \}_{(c,d)\in\gP}$.
The BCR rule for the MEUFE policy class is defined similarly, and is obtained by replacing $\pi^*_\rho$ with $\pi^\eps_\rho$. 

The ActivePTW algorithm is best thought of as an efficient way of implementing a policy that samples actions from either $\pi^*(\cdot \,|\, h_{<t})$ or $\pi^\eps(\cdot \,|\, h_{<t})$.
The key insight is that although the number of possible partitions of time indices up to $t \leq 2^D$ is extremely large -- of the order of $O(2^{2^D})$ -- many of these partitions share common overlapping structure, so locally behave equivalently at any single fixed time.
Because of this equivalence, it is sufficient just to consider the posterior over the active segments, with the posterior mass associated to each active segment being the sum given in \Eqref{eq:segment_posterior_sum}.
This sum can be efficiently computed in $O(D)$ time and space by exploiting the recursive tree structure in the PTW prior; see \cref{sec:EffPostStats} for details.
This leads to a hierarchical sampling algorithm, where we first sample an active segment, condition on this and then sample from the Beta distributions which govern the posterior over each arm's latent Bernoulli parameters within the sampled segment.

\paragraph{Algorithm.}
We will present two equivalent versions of the {\sc ActivePTW} policy, the first for expository purposes in the main paper which focuses on the key conceptual steps (\cref{alg:active_ptw_simple}), and a more detailed presentation in \cref{sec:EffPostStats} which describes how to update the sufficient statistics for the PTW-KTE environment measure in time and space $O(D)$ and the sampling operations in time $O(D + A)$.
Structurally, the {\sc ActivePTW}$_D(\pi^*)$ algorithm is very similar to the classic Thompson Sampling algorithm for stationary environments, only with the additional step of first sampling an active segment from the PTW posterior.

\begin{algorithm}
\caption{ActivePTW$_D(\pi)$}\label{alg:active_ptw_simple}
\begin{algorithmic}
\\
\Require{Unknown environment $\mu$}
\Require{The number of arms $A \in \mathbb{N}$}
\Require{Maximum execution time $T \in \mathbb{N}$}
\Require{A choice of reference policy class $\pi_\rho \in \{\pi^*_\rho, \pi^\eps_\rho\}$}
\\
\Ensure{$T \leq 2^d$ for some $d \in \mathbb{N}$}
\\
\For{$t=1 \dots T$}
    \State $(x,y) \sim q_t(\cdot)$ \Comment{Sample an active segment}
    \State $\Theta \sim w_t(\cdot \,|\, (x,y))$ \Comment{Sample arm parameters for segment $(x,y)$}
    \State Set $\rho$ to an environment with matching $(x,y)$ and $\Theta$ 
    \State Perform action $a_t \sim \pi_\rho(\cdot \,|\, h_{<t})$ \Comment{Sample an action from reference policy $\pi_\rho$}
    \State Transmit action $a_t$ to the environment $\mu$.
    \State Receive percept $e_t \sim \mu(\cdot | h_{<t} a_t)$.
    \State $t \gets t + 1$
\EndFor
\\
\State \Return $a_{1:T} \in \mathcal{A}^T$
\\

\end{algorithmic}
\end{algorithm}

\paragraph{Source code.}
The reference implementation used to generate all of our subsequent empirical results is provided at:
\url{https://github.com/google-deepmind/active_ptw}.
The key implementation techniques are to manipulate all probabilities in log-space, and to compactly maintain the sufficient statistics for the posterior probabilities using an array of length $O(D|\gA|)$.

\subsection{Theoretical Analysis.}

\paragraph{Overview.}
Earlier we provided redundancy guarantees for the {\sc ptw-kte} environment measure, what remains is to provide a similar argument for for the agent measure.
In this section we restrict our attention to the $\Pi^\eps$ policy class, since only weak guarantees are possible for the MEU policies (see \cref{sec:discussion} for more details).

\paragraph{Behavior under Forced Exploration.}
We will start with a high probability quantification of the exploratory behaviour of the MEUFE policy on a segment of size $n=2^D$ with forced exploration rate $\alpha=1/\sqrt{n}$.
The probability any single arm is \emph{forcibly} explored at time $t$ is $p := \alpha/|\gA|$.
Let $B_{t,i} \sim \text{\sc Bernoulli}(p)$ denote the random variable who takes on value 1 if arm $i$ is forcibly explored at time $t$.
Next consider a collection of $|\gA|$ random variables $V_i = \sum_{t=1}^n B_{t,i}$ which count the number of times an arm $i$ is forcibly explored within a segment, with $\mathbb{E}[V_i] = \sqrt{n}/|\gA|$.
By Hoeffding's Inequality, we have that for $\eps>0$,
\begin{equation*}
\mathbb{P}\left(|V_i - \mathbb{E}[V_i]| \geq \epstr n \right) \leq 2\exp \{-2\eps^2 n\},  
\end{equation*}
and so via the union bound we have
\begin{equation*}
\mathbb{P}\left( \bigvee_{i=1}^A  |V_i - \mathbb{E}[V_i]|  \geq  \eps n \right )
\leq \sum_{i=1}^A \mathbb{P}\left(  |V_i - \mathbb{E}[V_i]|  \geq \eps n \right )
\leq 2 |\gA| \exp \{-2\eps^2 n\} ),
\end{equation*}
therefore with probability $1-2 |\gA| \exp \{-2\epstr^2 n\} )$ each arm will have been explored at least $\left \lfloor \sqrt{n}/|\gA|-\epstr n \right \rfloor$ times after $n$ draws from any sequence of MEUFE policies.

The next lemma, proven in \cref{sec:BetaTailBounds}, quantifies how the posterior over an arm's latent reward parameter concentrates rapidly around its true unknown value provided the arm is pulled sufficiently often.

\begin{lemma}[KT concentration]\label{lem:kt_concentration}
Given random variables $X_{1}, X_2, \dots, X_n$ with $X_i \sim \text{Bernoulli}(\theta)$, and letting $S_t := \sum_{i=1}^t X_i$, we have
\begin{equation*}
\mathbb{P}\left[ |\theta - \bar{\theta} \,|\, \geq \epsilon | S_t = s\right] \leq \frac{1.25(t-1)^{3/2}}{\sqrt{(s-\frs12)(t-s+\frs12)}}  
e^{-2((t-1)\epsilon^2)}
\end{equation*}
provided $2\leq s\leq t-2$, where $\bar{\theta} := \frac{\frs12+s}{t+1}$ denotes the posterior mean of the KT estimator.
\end{lemma}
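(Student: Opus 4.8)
The plan is to work directly with the posterior distribution of $\theta$ given $S_t = s$, which is a Beta distribution, and then apply a tail bound for Beta random variables. Specifically, under the Jeffreys prior $w_J(\theta) \propto [\theta(1-\theta)]^{-1/2}$ combined with a Binomial$(\theta)$ likelihood having $s$ successes and $t-s$ failures, the posterior is $\mathrm{Beta}(s+\tfrac12, t-s+\tfrac12)$, whose mean is indeed $\bar\theta = \frac{s+1/2}{t+1}$. So the left-hand side is $\mathbb{P}[|\theta - \bar\theta| \geq \epsilon]$ where $\theta \sim \mathrm{Beta}(s+\tfrac12, t-s+\tfrac12)$.

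First I would reduce the Beta tail bound to a concentration statement for a sum of independent Bernoulli (or Beta$(1,1)$-type) variables, or alternatively invoke a known sub-Gaussian-type tail bound for Beta distributions. The cleanest route: a $\mathrm{Beta}(\alpha,\beta)$ random variable with $\alpha,\beta \geq 1$ is known to be sub-Gaussian, and more usefully here, one can use the representation linking Beta tails to Binomial tails (the ``Beta–Binomial duality'': $\mathbb{P}[\mathrm{Beta}(a,b) \leq x] = \mathbb{P}[\mathrm{Binomial}(a+b-1, x) \geq a]$ when $a,b$ are integers, with an analogous incomplete-Beta identity in the half-integer case). Then Hoeffding's inequality on the Binomial side gives an exponential bound of the form $e^{-2(t-1)\epsilon^2}$ — note the effective sample size is $\alpha+\beta-1 = t$, but the paper writes $t-1$, which I would obtain by being slightly careful with the continuity correction or by using the variance-based bound; this discrepancy of $1$ in the exponent is exactly the kind of routine bookkeeping I would not grind through here.

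Second, I would account for the polynomial prefactor $\frac{1.25 (t-1)^{3/2}}{\sqrt{(s-1/2)(t-s+1/2)}}$. This arises because Hoeffding-type bounds on the CDF, when converted back to a statement about the density-weighted tail of the Beta, pick up a factor proportional to the ratio of the Beta normalizing constants, i.e. roughly $\frac{1}{B(s+1/2,\,t-s+1/2)}$ times a Gaussian-like integral; using Stirling's approximation on $\Gamma(s+\tfrac12)$, $\Gamma(t-s+\tfrac12)$, $\Gamma(t+1)$ yields a factor of order $\sqrt{\frac{t^{3}}{s(t-s)}}$, and the numerical constant $1.25$ absorbs the $\sqrt{2\pi}$-type factors and the slack in Stirling. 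The hypothesis $2 \leq s \leq t-2$ is what keeps $s - \tfrac12$ and $t-s+\tfrac12$ bounded away from zero so that this prefactor is finite and the Stirling estimates are valid.

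The main obstacle I anticipate is getting the constant $1.25$ and the precise exponent $2(t-1)\epsilon^2$ simultaneously, rather than the approach itself: one must be careful that the Gaussian approximation to the Beta density is controlled uniformly in $s$ (not just near the mode), which typically requires either a Bernstein-type argument or an explicit bound on the incomplete Beta function. I would likely cite a standard Beta-tail inequality (e.g. from the concentration-inequality literature on Beta/Dirichlet distributions) and then verify the stated constants hold under $2 \leq s \leq t-2$, deferring the full computation to the appendix as the lemma statement itself indicates (``proven in \cref{sec:BetaTailBounds}'').
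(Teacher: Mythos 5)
You set up the problem correctly (the posterior is $\mathrm{Beta}(s+\tfrac12,\,t-s+\tfrac12)$ with mean $\bar\theta$, and the claim is a two-sided tail bound for this Beta), and your Stirling heuristic for the prefactor is on target: the paper's $\frac{(t-1)^{3/2}}{1.6\sqrt{(s-1/2)(t-s+1/2)}}$ does come from a Stirling lower bound on the normalizing constant $B(s+\tfrac12,\,t-s+\tfrac12)$, and the condition $2\le s\le t-2$ is exactly what makes the Stirling bounds (valid for arguments $\ge 1$) applicable. However, the central step is missing. Your primary route --- Beta--Binomial duality plus Hoeffding --- does not apply here, because the duality $\mathbb{P}[\mathrm{Beta}(a,b)\le x]=\mathbb{P}[\mathrm{Binomial}(a+b-1,x)\ge a]$ requires integer parameters, and the Jeffreys posterior has half-integer parameters; there is no clean "analogous identity" to fall back on, and this route would in any case produce a bound \emph{without} the polynomial prefactor, which is inconsistent with your second paragraph. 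Your fallback --- "cite a standard Beta-tail inequality" --- defers precisely the content of the lemma, which is the explicit constants.

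What the paper actually does, and what your sketch lacks, is an elementary direct bound on the incomplete Beta integral: for $x$ below the mode $\hat\theta=a/(a+b)$, unimodality of the integrand gives $\int_0^x t^{a}(1-t)^{b}\,dt \le x\cdot x^{a}(1-x)^{b}$, and the key algebraic identity $x^{a}(1-x)^{b}=e^{-(a+b)[\mathrm{KL}(\hat\theta\|x)+H(\hat\theta)]}$ (with $H$ the binary entropy) lets the $e^{-(a+b)H(\hat\theta)}$ factor cancel against the Stirling lower bound $B(a+1,b+1)\ge \frac{1.6\sqrt{ab}}{(a+b)^{3/2}}e^{-(a+b)H(\hat\theta)}$. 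What survives is $e^{-(a+b)\mathrm{KL}(\hat\theta\|x)}$, and Pinsker's inequality $\mathrm{KL}(\hat\theta\|\bar\theta-\epsilon)\ge 2\epsilon^2$ (using that $\bar\theta$ equals the mode $\hat\theta$ of the shifted-parameter integrand) yields the exponent $2(a+b)\epsilon^2=2(t-1)\epsilon^2$ --- so the $t-1$ is not a continuity correction but simply $a+b=(s-\tfrac12)+(t-s+\tfrac12)=t-1$. Without the endpoint bound on the integrand, the entropy cancellation, and Pinsker, your proposal does not produce the stated exponential rate or constants.
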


Because of the fast rate of concentration within a segment, and provided the forced exploration of the MEUFE policy causes the posterior over active segments to concentrate sufficiently fast with respect to the latent change-point structure, its likely that in many cases of interest the {\sc ActivePTW} algorithm will enjoy low regret.
Making this precise is deferred to future work.

\paragraph{Improved Regret with Modified PTW prior.} 
It is possible to generalise the Context Tree Weighting \citep{Willems95} technique to permit non-uniform recursive hierarchical prior weightings.
The main idea is to replace the $\tfrac{1}{2}$ constants with $\gamma$ and $1-\gamma$ in Lemma \ref{lemma:ptw_recursive}, with $\gamma \neq \tfrac{1}{2}$.
This has the effect of biasing the shape of the learnt latent tree structure towards more parsimonious ($\gamma > \tfrac{1}{2}$) or more bushy ($\gamma < \tfrac{1}{2}$) tree structures.
As the PTW technique shares the same prior structure as CTW by definition, this idea directly leads to a generalisation of Lemma \ref{lemma:ptw_recursive}, where we have
\begin{multline}\label{eq:ptw_lemma_alpha}
	\text{\sc ptw-kte}_D(e_{1:t} \,||\, a_{1:t}) = 
	\gamma \, \text{\sc kte}(e_{1:t} \,||\, a_{1:t}) \,+ 
	\\ (1-\gamma) \, \text{\sc ptw-kte}_{D-1}(e_{1:k} \,||\, a_{1:k}) \, \text{\sc ptw-kte}_{D-1}(e_{k+1:t} \,||\, a_{k+1:t})
\end{multline}
for $D>0$.
In some situations this can be desirable; for example, in \cite{Tjalkens1993SequentialWA}, a known upper bound on the redundancy of the Multi-alphabet KT estimator (a type of Dirichlet-Multinomial model) is exploited to better adapt a CTW-based prior for multi-alphabet $k$-Markov sources.

Here we do something similar, noting that the worst case redundancy of the KTE environment has a dependence on the number of arms, suggesting that $\gamma$ should be set as a function of $A$.
\cref{eq:ptw_lemma_alpha} shows that the cost (in terms of bits) of not knowing the best in hindsight $\gP \in \gC_D$ can be calculated recursively.
There is one decision to make every time the recurrence is unrolled: either we \emph{stop}, in which case we pay a cost of $-\lb \gamma$; or we \emph{split} and pay a cost of $-\lb (1-\gamma)$.
Now, from \cref{prop:kte_redundancy}, we have that
\begin{equation*}
-\lb \text{\sc ptw-kte}_D(e_{1:t} \,||\, a_{1:t}) 
~\leq~ \frac{A}{2}\lb \left(  \frac{n}{A} \right) + A 
~\leq~ \frac{A}{2}\lb \left(  \frac{2^D}{A} \right) + A ~\leq~ A \left(\tfrac{D}{2}+1\right).
\end{equation*}
Ignoring the dependence on $D$, and noting that stopping is roughly $A$ times as costly as splitting, and that for any partition $\gP$ the number of splits $\approx$ the number of stops, we chose $\gamma = \argmin_\gamma\{ - A \lb \gamma - \lb (1-\gamma)\} = A/(A+1)$ to minimise the worst case redundancy independent of $\gP$.
Empirically, $\gamma := A/(A+1)$ performed substantially better than $\gamma := \tfrac{1}{2}$, and is used in all of our subsequent experiments in \cref{sec:results}.
Intuitively, as the number of arms gets larger, it is more difficult to learn each locally stationary environment, so more simple partition structures are to be favoured. 

\section{Experimental Results}
\label{sec:results}

For our results, we compared ActivePTW with a number of different algorithms from the non-stationary stochastic bandit literature, as well as some algorithms designed for the stationary setting as baselines:
\begin{itemize}
    \item \emph{Thompson Sampling} \citep{Chapelle2011}; here the environment is modelled by a collection of Beta-Bernoulli models, one per possible action. Each arm is played proportionally to the probability that it is best, which in practice means first sampling an environment $\rho$ from the posterior over environments, and then playing an arm with the optimal expected reward with respect to $\rho$.
    \item \emph{UCB} \citep{Auer2002}; this well known family of algorithms applies the optimism in the face of uncertainty principle, by estimating the expected value of playing each arm and greedily selecting an arm with the highest upper confidence bound. 
    The specific variant we investigate in this work is UCB1.
    \item \emph{Sliding Window UCB} \citep{Garivier2008};
    This UCB variant is similar to UCB1, except that it computes its value/upper confidence estimates from a fixed $W$ length window of the most recent data, which has the effect of continually forcing exploratory actions.
    \item \emph{MASTER} \citep{Wei2021}; MASTER is a recently introduced meta-algorithm which takes a base algorithm designed for the stationary case, and implements a clever stochastic restarting strategy based on the base algorithms known regret properties.
    In this work we combined MASTER with UCB1, as this was the best performing combination in our experiments.
    \item \emph{KL-UCB} \citep{Garivier2011}; the KL-UCB algorithm is a UCB variant which incorporates knowledge about the reward distribution; in our work we consider the case of Bernoulli distributed rewards.
    In our experiments, the additional MALG restarting tests never triggered, so the performance of MASTER and its internal worker algorithm MALG is identical.
\end{itemize}

\paragraph{Change-point regimes.}
We investigate three different change-point regimes, with varying number of actions and latent parameters.
All results are averaged across multiple episodes, along with 95\% confidence intervals (assuming asymptotic normality) reported directly in tabular form, or indicated graphically by shading.
The different regimes we consider are given below:
\\
\\
a) \emph{(Geometric spacing, uniform initialisation)}
Here each temporal partition consists of a set of segments, with each segment length $L \sim f_L(L=k) := p(1-p)^k $, with the success probability $p \in [0,1]$ governing how likely changes are to occur within an episode.
At the beginning of the episode, or when a change occurs, the Bernoulli parameters for each arm are sampled independently from a uniform distribution over $[0,1]$.
\\
\\
b) \emph{(No change-points, uniform initialisation)}
This setting is provided as a baseline.
Here the Bernoulli parameters for each arm are sampled independently from a uniform distribution over $[0,1]$ at $t=1$ and are held fixed for each episode.
\\
\\
c) \emph{(Geometric spacing, adversarial initialisation)}
This setting is identical to regime a), except that when a change occurs, the latent Bernoulli parameter for the previous best arm(s) stays the same.
This setting is more challenging from an exploration point of view.

\paragraph{Geometrically Distributed Change-points.}

\begin{figure}[t]
    \centering
    \includegraphics[scale=0.38]{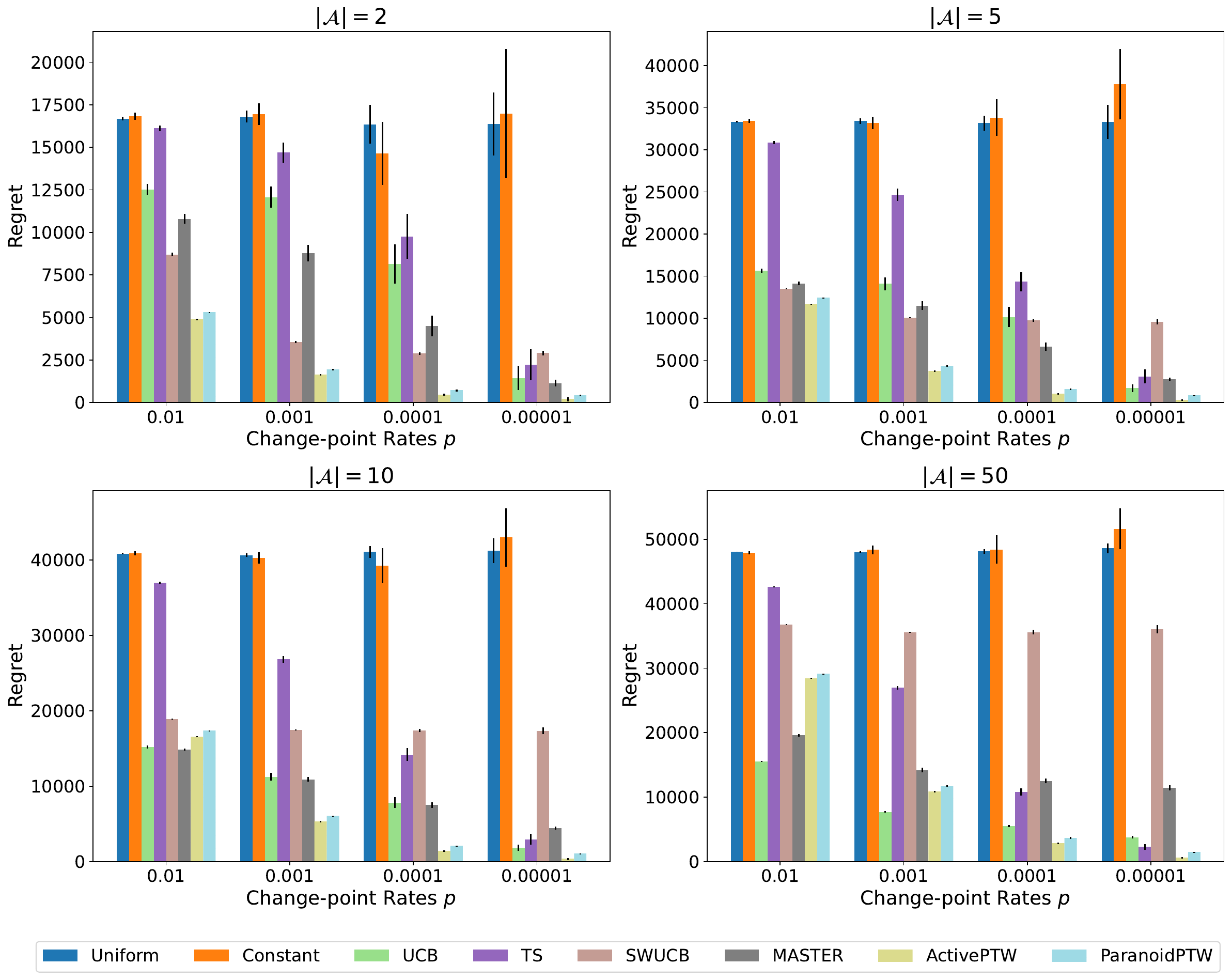}
    \caption{Detailed Results. Each panel shows the final regret for all algorithms under different change-point rates and action space cardinality.}
    \label{fig:geo_graph_summary}
\end{figure}

\cref{fig:geo_graph_summary} summarises the relative performance of each algorithm in change-point regime a).
\Cref{tbl:geo1,tbl:geo2,tbl:geo3,tbl:geo4}, provided in \cref{sec:detailed_results}, give the exact numbers and 95\% confidence intervals.
Here, the average empirical (cumulative) regret is reported for each algorithm/change-point rate for different action space sizes.
From inspection, one can see that the PTW variants generally outperform all other competing algorithms, with the except when the number of actions is large and the change-point rate high, then UCB outperforms all alternatives.
For Sliding Window UCB, these results are somewhat best-case, in the sense that the window size was set assuming that we knew the expected length of each segment in advance, i.e. $W=1/p$.

An illustrative example is shown in \cref{fig:geocpts}.
Here a single \emph{fixed} geometrically spaced segmentation (with success probability 0.0002) is sampled and used across all algorithms.
One can see the strong performance of the ActivePTW variants from the graphs, with MEU variant performing slightly best.
Here, the forced exploration is less necessary since pulling the previous best arm will still likely be informative.
The next best performer is MALG/MASTER, followed by Sliding Window UCB (whose window size was set to 5000). 

\begin{figure}
    \centering
    \includegraphics[width=\textwidth]{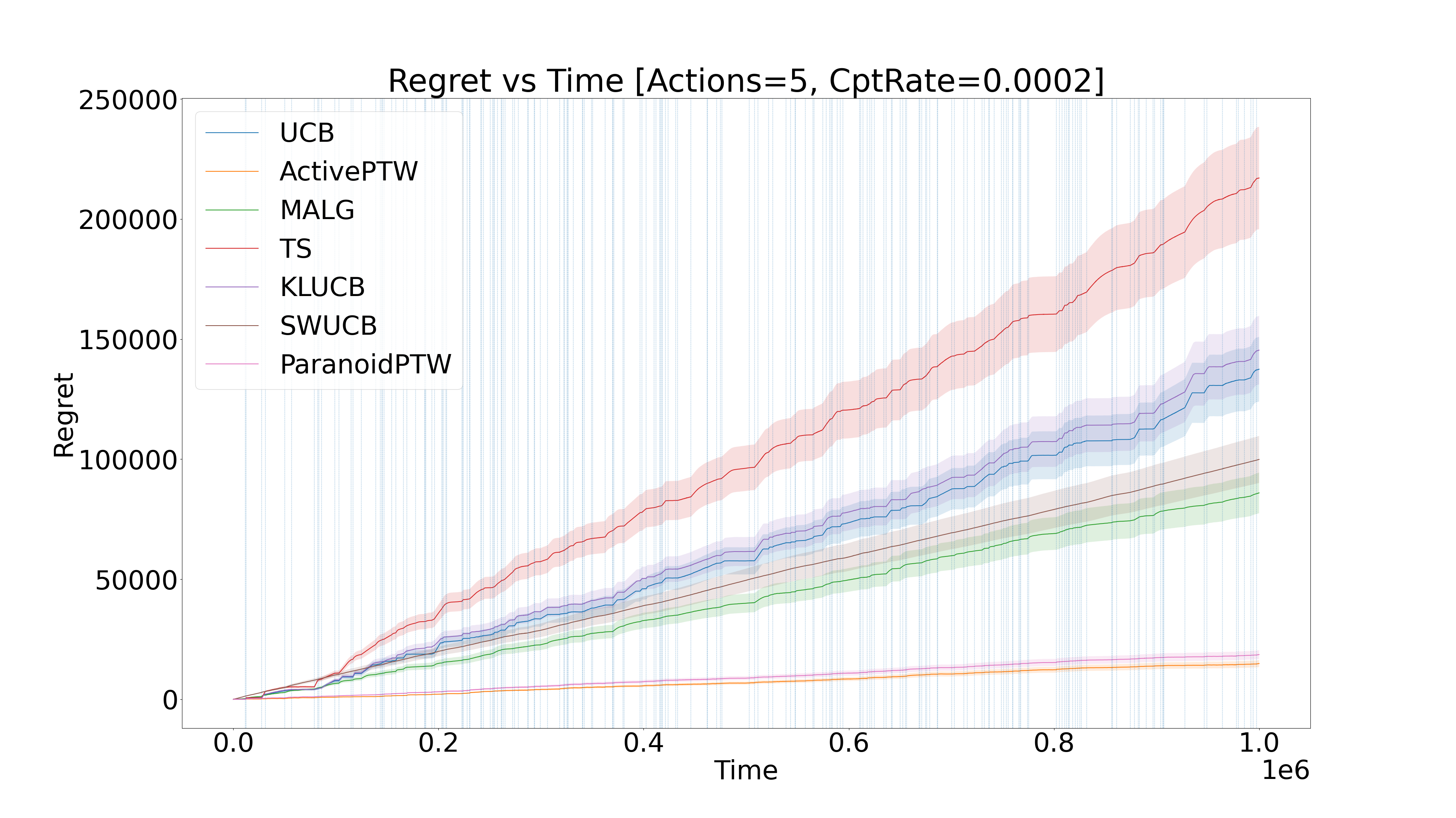}
    \vspace{-2.5em}
    \caption{Average regret across 400 repeats, approximate 95\% confidence intervals are indicated by shading, with $T=10^6$, $A=5$, and each segment length sampled from a geometric distribution with success probability $0.0002$.}
    \label{fig:geocpts}
\end{figure}

\paragraph{Stationary Environments.}
\label{sec:experiments_stationary}
\Cref{fig:stationary} shows the performance of both {\sc master} and the {\sc ActivePTW} algorithms versus some well known algorithms (UCB/KL-UCB/Thompson Sampling) designed for the stationary setting in change-point regime b).
Here the results for each algorithm were averaged across 400 episodes, $|\gA|=5$, with each episode consisting of 5000 steps.
To reduce variance, the same collection of seeds which govern the random draws for each environment were used to evaluate each algorithm (i.e.\ an application of the   Common Random Numbers variance reduction technique).

One can see that the performance of {\sc ActivePTW$(\pi^*)$} is nearly identical to that of Thompson Sampling.
This is the expected behaviour, as the environment redundancy of ${\text{\sc ptw-kte}_D(e_{1:t} \,||\, a_{1:t})}$ with respect to ${\text{\sc kte}_d(e_{1:t} \,||\, a_{1:t})}$ is upper bounded by $-\log \gamma$ bits for any $t \leq 2^D$ using a standard dominance argument. 
What happens in practice is that the posterior mass concentrates on partitions with only one segment covering the entire range of the observed data, so the algorithm essentially collapses to staionary Thompson Sampling in this case.
The performance of {\sc ActivePTW}$(\pi^\eps)$ is predictably worse than {\sc ActivePTW$(\pi^*)$}, as the forced exploration is unnecessary in this simpler setting, though it is important to note that the empirical performance here is substantially better than MASTER.
KL-UCB also unsurprisingly outperforms UCB in this setting, as by design it has additional knowledge about the structure of the reward distribution.

\begin{figure}
    \centering
    \vspace{-1em}
    \includegraphics[width=1\textwidth]{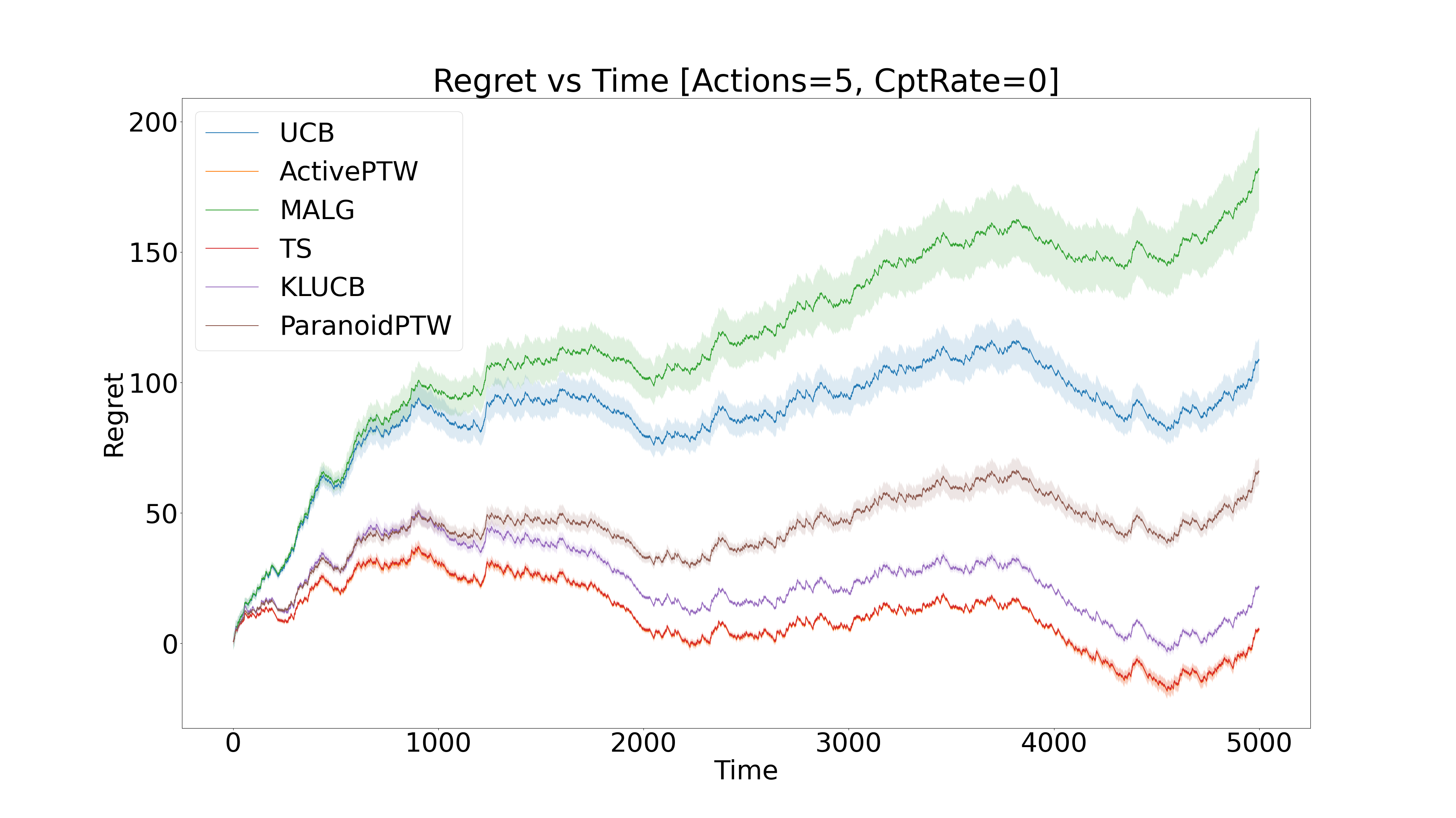}
    \vspace{-1em}
    \caption{A baseline comparison on stationary stochastic bandit problems. Approximate 95\% confidence intervals are indicated by shading, and are computed using 400 runs of each algorithm. Note that the performance of Thompson Sampling and ActivePTW using an MEU reference policy is nearly identical and difficult to distinguish on the graph.}
    \label{fig:stationary}
\end{figure}

\paragraph{An adversarial case that motivates the need for forced exploration.}
\label{sec:experiments_tricky}

As we saw previously, when using the MEU reference policy, the performance of {\sc ActivePTW} matches that of Thompson Sampling in stationary environments.
This is unsurprising given the design of the algorithm, but sadly undesirable in terms of worst case performance.
In particular, Theorem 31.2 in \cite{Lattimore2020Book} gives a constructive proof\footnote{The result is given for abruptly changing Gaussian bandits, but a similar construction and result applies to the Bernoulli case.} that any algorithm which enjoys logarithmic regret (such as Thompson Sampling) in stationary stochastic bandit problems must suffer regret $\Omega(t/\log(t))$ on some NSSBP instances. 

Taking inspiration from the aforementioned theoretical construction, here we considered an NSSBP with two equally sized segments, each of length $5000$, and set the number of arms to $A=10$.
In the first segment, $\theta^{1,2500}_i = 0.1$ for all $1 < i \leq A$, with $\theta^{1,2500}_1=0.2$.
In the second segment, $\theta^{2501,5000}_i = 0.2$ for all $i \neq 2$, and $\theta^{2501,5000}_2 = 0.8$.
The important thing to note is that the expected return of playing the best arm (arm 1) in segment 1 remains the same in segment 2, and that only one arm in segment 2 (arm 2) has an expected return better than the best arm in segment 1.
This implies that playing arm 1 in segment 2 provides no information that a change-point has occurred.
\cref{fig:adversarial} shows a summary of results in this setting, averaged over 1600 runs.
One can see that {\sc ActivePTW} using a MEU reference policy performs poorly on this hard example, essentially performing similarly to naive Thompson Sampling, and being outperformed by both MASTER and {\sc ActivePTW}$(\pi^\eps)$.
This is a concrete example which justifies the need for forced exploration.

Note that there are regimes where a more exploitative reference policy would be expected to perform better; in particular, if whenever a change-point occurred, if the previously optimal arm's latent Bernoulli parameter changed by some non-trivial constant amount, then the change-point detection problem is very similar to the passive case of sequence prediction in which the Partition Tree Weighting technique is known to work well \citep{veness13}, so here {\sc ActivePTW}$(\pi^*)$ would also be expected to perform well.

\begin{figure}[ht]
    \centering
    \vspace{-1em}
    \includegraphics[width=1\textwidth]{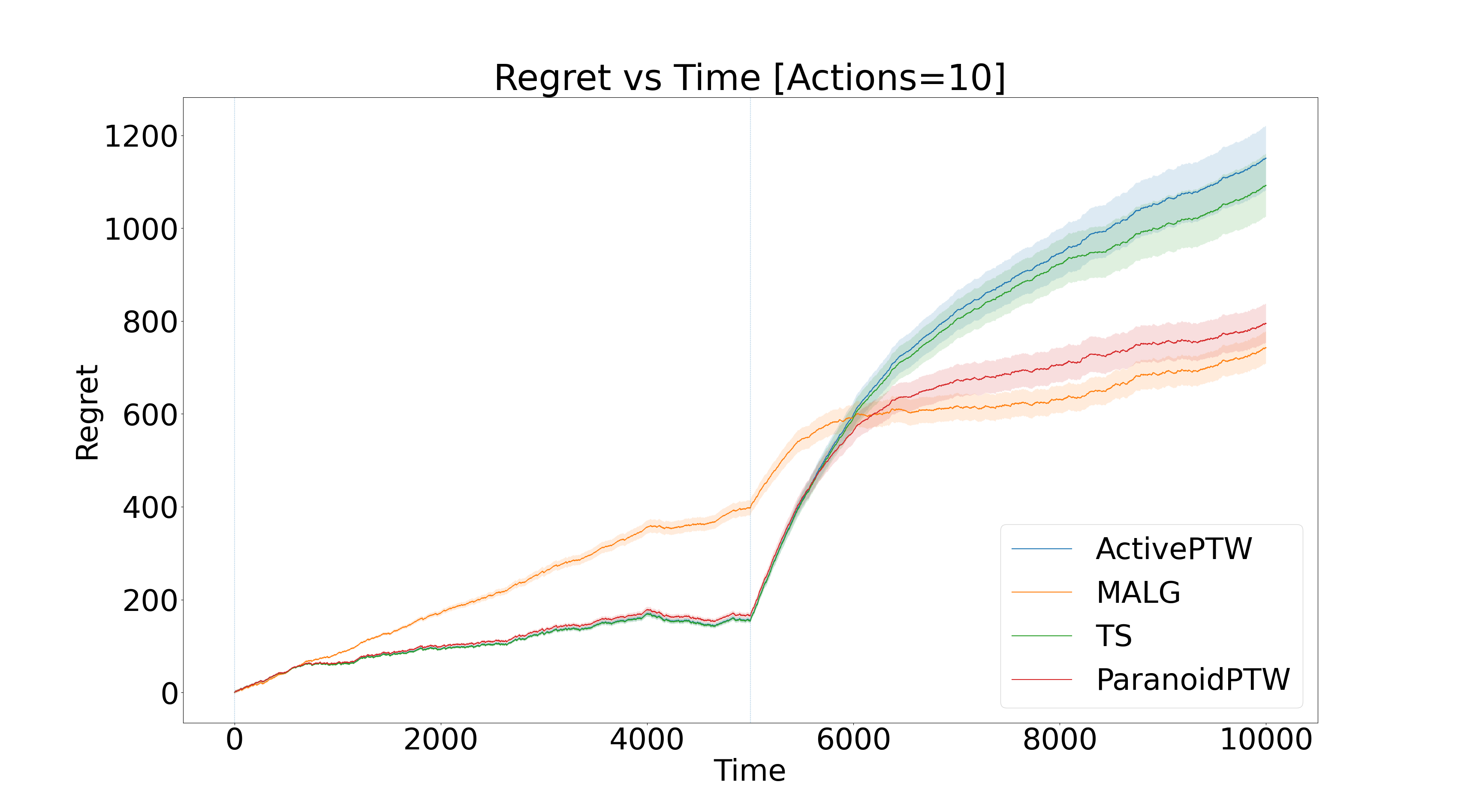}
    \vspace{-2em}
    \caption{An adversarial example, with a single changepoint at $t=5000$. Approximate 95\% confidence intervals are indicated by shading, and are computed using 1600 runs of each algorithm.}
    \label{fig:adversarial}
\end{figure}

\section{Discussion}
\label{sec:discussion}

\paragraph{Connection with Thompson Sampling.}

By identifying $\pi_\rho$ with the maximum expected utility policy $\pi_\rho^*$ for each $\rho \in \mathcal{M}$, instantiating the Bayesian Control Rule and sampling $a_t \sim \hat{\pi}$ gives the now well known Thompson Sampling for stochastic bandits algorithm which has strong finite time regret guarantees \citep{kaufmann2012thompson}.

The way in which the Bayesian Control Rule explores is interesting -- at each time step, it is limited to a convex combination of the $\pi_\rho(\cdot \,|\, ax_{1:t})$ for $\rho \in \mathcal{M}$.
Even if the environments are distinguishable in principle, this class of policies may not be sufficiently expressive to allow for the posterior to concentrate in longer horizon settings.
To this end, more general forms of Thompson Sampling for general agents acting over multi-step horizons have been considered by \cite{Strens2000} and \cite{Leike2016}, with the later work using a growing horizon and a recoverability assumption to enable asymptotic optimality guarantees to be provided for general environments.
Another related work for general environments, which explores more aggressively, is the BayesEXP \citep{Lattimore2014BayesianRL} algorithm, which optimises the expected multi-step information gain across essentially \emph{all} policies. 

\paragraph{Connection with the origins of cybernetics.}
In his classic philosophical work,``The Human Use of Human Beings", \cite{Wiener1950} put forth the idea that a living organism (or agent) could be viewed as a type of entropy-constrained adaptive process coupled to an input/output channel, giving rise to his cybernetic model.
This shares similarities with modern AI frameworks such as reinforcement learning which make a sharp distinction between agent and environment, but importantly differs in that there are no rewards, nor any commitment to various normative notions to constrain what so-called rational behavior looks like. 
This perspective is very much in the spirit of the universal source coding perspective adopted in this paper, which in turn builds on the active-coding work of \cite{Ortega2008, Ortega2009, Ortega2010AnAF, Ortega2012AdaptiveCO, ortega14}.
In light of this, it seems that the Bayesian Control Rule can be interpreted as just one, albeit fundamental, example of such a universal control policy, of which the classic Thompson Sampling technique for stationary stochastic bandits can be derived by further constraining the class of reference policies considered.

\section{Conclusion}
\label{sec:conclusion}

We presented a new algorithm, {\sc ActivePTW}, for the non-stationary stochastic bandit problem.
Our work differs from the design of many bandit algorithms in that it is derived from an application of universal source coding principles to the case of control.
It does not seem unreasonable to expect that other universal source coding techniques could be similarly generalised to yield different types of agents with interesting properties.
Given the strength of the empirical evidence supporting both our work and Thompson Sampling in the stationary case, this perspective seems like a promising avenue for further investigation.

\subsubsection*{Acknowledgments}
We would like to thank Pedro Ortega for many enjoyable discussions which inspired this work, as well as Nevena Lazic for sharing some code which helped us verify some of our empirical results.

\bibliographystyle{plainnat}
\bibliography{refs}

\clearpage\appendix


\nomenclature[A]{\(\mathcal{A}\)}{Action space}
\nomenclature[A]{\(\mathcal{O}\)}{Observation space}
\nomenclature[A]{\(\mathcal{R}\)}{Reward space}
\nomenclature[A]{\(\mathcal{E}\)}{Percept space $\equiv$ Cartesian product of observation and reward spaces $\equiv \mathcal{O}\times\mathcal{R}$}
\nomenclature[A]{\(h_{1:t}\)}{A history $h_{1:t} := a_1 e_1 a_2 e_2 \dots a_t e_t  \in \mathcal{H}$}
\nomenclature[A]{\(\mathcal{H}\)}{Set of all possible histories}
\nomenclature[A]{\(t\)}{time index, $t \in \mathbb{N}$}
\nomenclature[A]{\(e_{1:t}\)}{A percept history where $e_{1:t} := e_1  e_2 \dots e_t  \in \mathcal{E}^t$}
\nomenclature[A]{\(a_{1:t}\)}{An action history where $a_{1:t} := a_1  a_2 \dots a_t  \in \mathcal{A}^t$}
\nomenclature[A]{\(\mu\)}{True environment, typically latent/unknown}

\nomenclature[M]{\((c,d)\)}{A segment, a set of times indices $\{ c, c+1, \dots, d \}$}
\nomenclature[N]{\(\mathbb{R}\)}{Real numbers}
\nomenclature[N]{\(\mathbb{N}\)}{Natural numbers}
\nomenclature[N]{\([a,b]\)}{Closed real interval between $a$ and $b$}

\nomenclature[P]{\(w_J(\theta)\)}{Jeffrey's Prior over $\theta \in [0,1]$ for a Beta-Binomial model}
\nomenclature[P]{\(\text{Binomial}_{\theta}(\cdot)\)}{Binomial distribution with parameter $\theta$}

\nomenclature[O]{\(A\)}{Number of arms}
\nomenclature[O]{\(\gamma\)}{Tree shape hyper-parameter for the hierarchical prior over partition trees. $\gamma$ is the recursive stop probability; ($1-\gamma$) is the recursive split probability.}
\nomenclature[M]{\(\text{$D$}\)}{Depth parameter defining maximum time $t < 2^D$}
\nomenclature[O]{\(\text{\sc ptw-kte}_D\)}{ActivePTW meta-algorithm}
\nomenclature[O]{\(\text{\sc KTE}\)}{KT Environment}
\nomenclature[O]{\(\text{\sc KT}\)}{Krichevsky–Trofimov estimator}
\nomenclature[O]{\(\text{$\pi_\mu^*$}\)}{Maximum Expected Utility reference policy (for a horizon of 1) for environment $\mu$}
\nomenclature[O]{\(\text{$\pi_\mu^\eps$}\)}{Maximum Expected Utility reference policy with forced exploration}
\nomenclature[O]{\((A,\Theta,\mu)\)}{A stationary stochastic Bernoulli bandit problem}
\nomenclature[O]{\(\Theta\)}{The latent parameters $\Theta \in [0,1]^A$ for the reward distribution of a stationary stochastic bandit problem}
\nomenclature[O]{\(\left(A, \gP, \{ \Theta^{c,d} \}_{(c,d)\in\gP}, \mu \right)\)}{A non-stationary stochastic bandit problem}
\nomenclature[M]{\(\gP\)}{A set of segments which partitions times 1 to $2^D$}
\nomenclature[M]{\(\gC_D\)}{The set of binary temporal partitions of times 1 to $2^D$}
\nomenclature[O]{\(\Theta^{(c,d)}\)}{The latent parameters $\Theta \in [0,1]^A$ for the reward distribution for segment $(c,d)$}

\printnomenclature

\clearpage
\section{Beta Tail Bounds}\label{sec:BetaTailBounds} 

\def\fr#1#2{{\textstyle\frac{#1}{#2}}} 
\def\eps{\varepsilon}           
\def\KL{\text{KL}}

\paragraph{Definitions.}
For $x,t\in[0;1]$ and $a,b>0$ we define
\begin{align*}
  \text{Beta density:} & & \text{Beta}(t;a,b) ~&:=~ \frac{t^{a-1}(1-t)^{b-1}}{B(a,b)} \\
  \text{Incomplete Beta function:} & & B(x;a,b) ~&:=~ \int_0^x t^{a-1}(1-t)^{b-1}dt \\
  \text{(Complete) Beta function:} & & B(a,b) ~&:=~ B(1;a,b) ~=~ \frac{\Gamma(a)\Gamma(b)}{\Gamma(a+b)} \\
  \text{Cumulative Beta distribution:} & & I_x(a,b) ~&:=~ \frac{B(x;a,b)}{B(a,b)} ~=~ \int_0^x  \text{Beta}(t;a,b)dt
\end{align*}

\paragraph{Beta posterior asymptotics.}
Consider $X_i\sim\,$Bernoulli($\theta$) and prior $p(\theta)=\text{Beta}(\theta;\alpha,\beta)$.
with $\alpha,\beta>0$ and $X_+:=X_1+...+X_n$. Then the posterior is 
$$
  p(\theta|X_+=k) ~=~\text{Beta}(\theta;k+\alpha,n-k+\beta)
$$
which has mean $\mu:=(k+\alpha)/(n+\alpha+\beta)$ 
and variance $\sigma^2=\mu(1-\mu)/(n+\alpha+\beta+1)$
and is asymptotically normal for $k,n\to\infty$, hence
\begin{equation}\label{eq:betaasympt}
  P[|\theta-\mu|\geq\eps|k;n,\alpha,\beta] ~\approx~ 2\Phi(-\eps/\sigma) 
  ~\approx~ \sqrt{\fr{2\sigma^2}{\pi\eps^2}}\cdot\text{\large{e}}^{-\eps^2/2\sigma^2} 
  ~=~ \text{\large{e}}^{-\eps^2\Theta(n)}
\end{equation}
by asymptotic normality and the asymptotic expansion of the cumulative normal function $\Phi$.
Iff $\eps=o(n^{-1/3})$ / $\eps=\omega(n^{-1/2})$ the first/second $\approx$ 
means asymptotically equal (ratio tends to 1, otherwise the ratio is exponentially/polynomially off).
E.g.\ $\eps=\Theta(n^{-2/5})$ satisfies both, but the second approximation can easily be improved (and a polynomial factor is is not severe anyway).
The above asymptotic expression informs us about the best achievable rigorous bounds:

\paragraph{Beta tail bounds.}
For $a,b>0$, the maximum of $t^a(1-t)^b$ is attained at $t=\hat\theta:=\fr{a}{a+b}$,
and $t^a(1-t)^b$ is monotone increasing/decreasing before/after this point.
Therefore, for $x\leq\hat\theta$ we have
\begin{align*}
  B(x;a+1,b+1) ~&=~ \int_0^x t^a(1-t)^b dt
  ~\leq~ x\cdot x^a(1-x)^b ~=~ x\cdot e^{-(a+b)[\KL(\hat\theta||x)+H(\hat\theta)]} \\
  \text{where}~~~ \KL(p||q) ~&:=~ ~~p\ln\fr{p}{q}+(1-p)\ln\fr{1-p}{1-q} ~~~~~~~\text{is the binary KL-divergence, and} \\
  \text{and}~~~~~~~~ H(p) ~&:=~ -p\ln p-(1-p)\ln(1-p) ~~~\text{is the binary entropy.}
\end{align*}
The equality can easily be verified by substitution.
A lower bound for $B(a+1,b+1)$ can be obtained via Stirling bounds
$$
  \sqrt{2\pi n}(\fr{n}{e})^n e^{\frac{1}{12n+1}} ~<~ \Gamma(n+1) 
  ~<~ \sqrt{2\pi n}(\fr{n}{e})^n e^{\frac{1}{12n}}
$$
which is valid for real $n\geq 1$. Hence for $a,b\geq 1$ we have 
\begin{align*}
  B(a+1,b+1) ~&=~ \frac{\Gamma(a+1)\Gamma(b+1)}{(a+b+1)\Gamma(a+b+1)} \\
  ~&\geq~ \frac{ \sqrt{2\pi a}(\fr{a}{e})^a e^{\frac{1}{12a+1}} \cdot
                 \sqrt{2\pi b}(\fr{b}{e})^n e^{\frac{1}{12b+1}}}%
                 {(a+b+1)\sqrt{2\pi(a+b)}(\fr{a+b}{e})^{a+b} e^{\frac{1}{12(a+b)}}} \\
  ~&=~ \frac{1}{(a+b+1)} \sqrt{\frac{2\pi ab}{a+b}} \Big(\frac{a}{a+b}\Big)^a \Big(\frac{a}{a+b}\Big)^b \cdot
                 e^{\frac{1}{12a+1}+\frac{1}{12b+1}-\frac{1}{12(a+b)}} \\
  ~&\geq~ \frac{1}{(a+b+1)} \sqrt{\frac{2\pi ab}{a+b}} e^{-(a+b)H(\hat\theta)} e^{0+0-\frac{1}{12(1+1)}}
  ~\geq~ \frac{1.6\sqrt{ab}}{(a+b)^{3/2}}\cdot e^{-(a+b)H(\hat\theta)}
\end{align*}
Combining both bounds we get
\begin{align*}
  I_x(a+1,b+1) ~\leq~ x\cdot G(x;a+1,b+1) 
  ~~~\text{where}~~~ G(x;a+1,b+1) ~&:=~ \fr{(a+b)^{3/2}}{1.6\sqrt{ab}} e^{-(a+b)\KL(\hat\theta||x)}
\end{align*}
Hence with $a+1=k+\alpha$ and $b+1=n-k+\beta$, 
we can upper bound $P[|\theta-\mu|\geq\eps|k;n,\alpha,\beta]$ via
\begin{align*}
  P[\theta\leq\mu-\eps|k;n,\alpha,\beta] ~&=~ I_{\mu-\eps}(k+\alpha,n-k+\beta) \\
  P[\theta\geq\mu+\eps|k;n,\alpha,\beta] ~&=~ I_{1-\mu-\eps}(n-k+\beta,k+\alpha)
\end{align*}
Note that $\mu=\hat\theta$ and hence $\KL(\hat\theta||\mu-\eps)\geq 2\eps^2$,
and similarly $\KL(1-\hat\theta||1-\mu-\eps)\geq 2\eps^2$ by Pinsker's inequality.
Together this implies
\begin{equation}\label{eq:betaboundk}
  P[|\theta-\mu|\geq\eps|k;n,\alpha,\beta] ~\leq~ \fr{1.25(n+\alpha+\beta-2)^{3/2}}{\sqrt{(k+\alpha-1)(n-k+\beta-1)}} e^{-2(n+\alpha+\beta-2)\eps^2}
\end{equation}
provided $k+\alpha\geq 2$ and $n-k+\beta\geq 2$.
Setting $\alpha = \tfrac{1}{2}$ and $\beta = \tfrac{1}{2}$ yields the concentration result for the KT-estimator given in \cref{lem:kt_concentration}. 

\paragraph{Uniform/expected tail bound.} 
The $\sqrt\cdot$ denominator in \cref{eq:betaboundk} is lower-bounded by 1,
so if we drop it, we get an upper bound that is independent of $k$:
\begin{equation}\label{eq:betabound}
  P[|\theta-\mu|\geq\eps|k;n,\alpha,\beta] ~\leq~ 1.25(n'-2)^{3/2} e^{-2(n'-2)\eps^2}
  ~~~\text{where~~~} n'~:=~ n+\alpha+\beta
\end{equation}
Comparing the exponential to the asymptotic expression \cref{eq:betaasympt},
we see that the bound is asymptotically quite tight for $\mu\approx k/n\approx 1/2$,
but quite loose otherwise.
This makes it plausible that \cref{eq:betabound} also holds for 
$k+\alpha<2$ and $n-k+\beta<2$. If so, it also holds in-expectation over $k$,
i.e.\ when dropping $|k$ from the left hand side.
A similar bound can be derived more easily as follows:
First we define the random version 
$M:=(X_+\!+\alpha)/n'$ of $\mu$ by replacing $k$ by $X_+$ in $\mu$,
and derive a (frequentist) bound conditioned on $\theta$:
\begin{align*}
  & P[\theta-M\geq\eps|\theta]
  ~=~ P[M\leq\theta-\eps|\theta]
  ~\stackrel{(a)}=~ P[X_+\leq n\theta-\gamma|\theta]
  ~\smash{\stackrel{(b)}\leq}~ e^{-2\gamma^2/n} \\
  ~&\leq~ e^{-2\gamma^2/n'}
  ~\stackrel{(c)}\leq~ e^{-2(n'\eps-\beta)^2/n'}
  ~=~ B\,e^{-2n'\eps^2} ~~~\text{with}~~~ B := e^{4\eps\beta-2\beta^2/n'} \leq e^{4\eps\beta}
\end{align*}
where we used/assumed $\gamma:=n'\eps-(\alpha+\beta)\theta+\alpha\geq n'\eps-\beta\geq 0$ in $(a,c)$,
and Hoeffding's bound in $(b)$.
Note that for $n'\eps\leq\beta$ $\exists\theta:\gamma=0$ so the bound is vacuous.
The dual 
$$
  P[\theta-M\leq-\eps|\theta] ~\leq~ A\,e^{-2n'\eps^2} ~~~\text{with}~~~ A := e^{4\eps\alpha-2\alpha^2/n'} \leq e^{4\eps\alpha}
$$
is derived in the same way. Since the bounds are independent of $\theta$,
they also hold in $\theta$-expectation. So together we get
\begin{align*}
  & P[|\theta-M|\geq\eps] ~\leq~ (A+B) e^{-2n'\eps^2} ~\leq~ 2 e^{-2n'(\eps-\eps_0)^2} 
  ~~~\text{for}~~~ n' \geq \max\{\alpha,\beta\}/\eps \\
  & \text{where}~~~\eps_0 ~:=~ \smash{\sup_\theta}|\E[M]-\theta| ~=~ \smash{\sup_\theta}|\alpha-\theta(\alpha+\beta)|/n' ~=~ \max\{\alpha,\beta\}/n'
\end{align*}
is the maximal bias of $M$ relative to $\theta$.
This bound is similar to \cref{eq:betabound}.

\section{Calculating the Active Segments}
\label{sec:active_segments} 

Given a time index $t\leq 2^D$, the active segments can be efficiently determined from the binary representation of $t-1$.
To give some intuition, consider \cref{fig:active_segments}, which shows a complete binary tree of depth $D=2$, with each node describing a possible segment and left/right edges labelled 0/1 respectively.
Consider the set of active segments corresponding to time $t=3$.
In this case the binary representation of $t-1=2$ is $10_2$; reading the bitstring left to right, this defines a path down the tree starting from the root node and continuing downwards, with each bit determining which edge (0 for left, 1 for right) to follow. 
The nodes we encounter are the active segments, in this case $\{ (1,4), (3,4), (3,3) \}$.

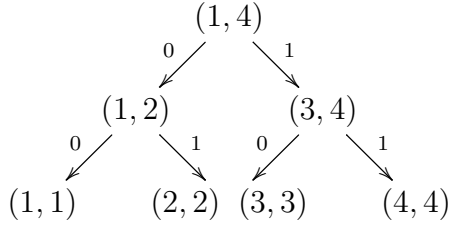
\begin{figure}
	\centerline{
		\xymatrix @ur {
			(1,1)   &(1,2) \ar[l]_0 \ar[d]^1 & (1,4) \ar[l]_0 \ar[d]^1 \\
			& (2,2) \;\; (3,3) & (3,4) \ar[l]_0 \ar[d]^1 \\
			&  & (4,4)
		}
	}
	\vspace{-6.0em}
	\caption{A labelled binary partition tree of depth 2.}\label{fig:active_segments}
\end{figure}

This observation leads to the following algorithm for computing {\sc ActiveSegments}$_D(t)$ for $t \leq 2^D$:
\begin{enumerate}\parskip=0ex\parsep=0ex\itemsep=0ex
    \item Set $b_{1:D}$ to the $D$-digit binary representation of $t-1$.\\
    \item Return $f(b_{1:D},1)$
\end{enumerate}
where $f$ is recursively defined as:
\begin{equation*}
f(b_{1:k},o) := \{o, o+2^k - 1\} \bigcup 
\begin{cases}
 \{  \} & \text{if}~~ b_{1:k} = \epstr;\\
 f(b_{2:k},o) & \text{if}~~ b_1 = 0;\\
 f(b_{2:k},o+2^{k-1}) & \text{if}~~ b_1 = 1.
\end{cases}
\end{equation*}

\clearpage

\section{Detailed Results}
\label{sec:detailed_results}
\vspace{-2em}
\begin{table}[!h] 
\caption{$T=10^6$, $|\gA|$=2, varying Geometric Change-point Rates $p$}
\vspace{-.5em}
\begin{center}
\small
\begin{tabular}{ | c | c | c | c | c |}
\hline
\bf{Algorithm} & $p=0.01$ & $p=0.001$ & $p=0.0001$ & $p=0.00001$ \\
\hline
Uniform & 16682.80 $\pm$ 112 & 16796.48 $\pm$ 349 & 16349.89 $\pm$ 1135 & 16373.79 $\pm$ 1860\\
Constant & 16831.00 $\pm$ 211 & 16955.33 $\pm$ 630 & 14636.78 $\pm$ 1845 & 16983.52 $\pm$ 3789\\
UCB & 12529.63 $\pm$ 308 & 12073.45 $\pm$ 633 & 8140.44 $\pm$ 1165 & 1445.66 $\pm$ 702\\
TS & 16115.16 $\pm$ 175 & 14689.31 $\pm$ 584 & 9766.88 $\pm$ 1328 & 2222.04 $\pm$ 921\\
SWUCB & 8696.69 $\pm$ 112 & 3556.60 $\pm$ 47 & 2882.61 $\pm$ 71 & 2906.37 $\pm$ 141\\
MASTER & 10788.93 $\pm$ 288 & 8778.16 $\pm$ 483 & 4511.88 $\pm$ 606 & 1137.33 $\pm$ 194\\
\hline
ActivePTW & 4872.67 $\pm$ 43 & 1625.44 $\pm$ 51 & 453.85 $\pm$ 67 & 189.19 $\pm$ 111\\
ParanoidPTW & 5288.69 $\pm$ 45 & 1936.64 $\pm$ 48 & 706.17 $\pm$ 57 & 416.07 $\pm$ 59\\
\hline
\end{tabular}
\end{center}
\label{tbl:geo1}
\end{table}


\vspace{-3em}
\begin{table}[h] 
\caption{$T=10^6$, $|\gA|$=5, varying Geometric Change-point Rates $p$}
\vspace{-.5em}
\begin{center}
\small
\begin{tabular}{ | c | c | c | c | c |}
\hline
\bf{Algorithm} & $p=0.01$ & $p=0.001$ & $p=0.0001$ & $p=0.00001$ \\
\hline
Uniform & 33319.00 $\pm$ 107 & 33404.95 $\pm$ 338 & 33159.95 $\pm$ 890 & 33303.03 $\pm$ 2006\\
Constant & 33409.47 $\pm$ 243 & 33191.28 $\pm$ 726 & 33825.04 $\pm$ 2149 & 37767.84 $\pm$ 4179\\
UCB & 15623.75 $\pm$ 238 & 14094.51 $\pm$ 784 & 10144.91 $\pm$ 1202 & 1702.98 $\pm$ 453\\
TS & 30861.35 $\pm$ 196 & 24655.39 $\pm$ 751 &  14350.63 $\pm$ 1132 & 3098.01 $\pm$ 841\\
SWUCB & 13512.51 $\pm$ 76 & 10053.85 $\pm$ 52 & 9738.46 $\pm$ 134 & 9564.74 $\pm$ 287\\
MASTER & 14132.31 $\pm$ 226 & 11498.56 $\pm$ 519 & 6630.28 $\pm$ 480 & 2771.50 $\pm$ 194\\
\hline
ActivePTW & 11688.19 $\pm$ 63 & 3710.23 $\pm$ 71 & 1038.22 $\pm$ 100 & 255.58 $\pm$ 92\\
ParanoidPTW & 12404.06 $\pm$ 68 & 4333.63 $\pm$ 69 & 1580.39 $\pm$ 73 & 813.64 $\pm$ 56\\
\hline
\end{tabular}
\end{center}
\label{tbl:geo2}
\end{table}


\vspace{-3em}
\begin{table}[h!] 
\caption{$T=10^6$, $|\gA|$=10, varying Geometric Change-point Rates $p$}
\vspace{-.5em}
\begin{center}
\small
\begin{tabular}{ | c | c | c | c | c |}
\hline
\bf{Algorithm} & $p=0.01$ & $p=0.001$ & $p=0.0001$ & $p=0.00001$\\
\hline
Uniform & 40855.62 $\pm$ 88 & 40638.48 $\pm$ 239 & 41087.54 $\pm$ 793 & 41243.84 $\pm$ 1625\\
Constant & 40910.90 $\pm$ 270 & 40269.33 $\pm$ 751 & 39285.85 $\pm$ 2336 & 42997.03 $\pm$ 3882\\
UCB & 15211.67 $\pm$ 178 & 11247.85 $\pm$ 510 & 7834.39 $\pm$ 722 & 1827.94 $\pm$ 426\\
TS & 36997.58 $\pm$ 137 & 26843.46 $\pm$ 445 & 14210.46 $\pm$ 853 & 2969.75 $\pm$ 697\\
SWUCB & 18909.91 $\pm$ 51 & 17477.58 $\pm$ 79 & 17412.77 $\pm$ 219 & 17350.04 $\pm$ 454\\
MASTER & 14855.74 $\pm$ 139 & 10914.08 $\pm$ 293 & 7512.62 $\pm$ 374 & 4448.03 $\pm$ 227\\
\hline
ActivePTW & 16559.30 $\pm$ 74 & 5303.99 $\pm$ 83 & 1384.77 $\pm$ 85 & 345.48 $\pm$ 115\\
ParanoidPTW & 17360.79 $\pm$ 72 & 6041.04 $\pm$ 76 & 2074.52 $\pm$ 71 & 1049.65 $\pm$ 67 \\
\hline
\end{tabular}
\end{center}
\label{tbl:geo3}
\end{table}

\vspace{-3em}
\begin{table}[H]
\caption{$T=10^6$, $|\gA|$=50, varying Geometric Change-point Rates $p$}
\vspace{-.5em}
\begin{center}
\small
\begin{tabular}{ | c | c | c | c | c |}
\hline
\bf{Algorithm} & $p=0.01$ & $p=0.001$ & $p=0.0001$ & $p=0.00001$ \\
\hline
Uniform & 48048.92 $\pm$ 44 & 47985.77 $\pm$ 121 & 48130.98 $\pm$ 362 & 48587.99 $\pm$ 759\\
Constant & 47928.04 $\pm$ 240 & 48347.07 $\pm$ 703 & 48385.84 $\pm$ 2204 & 51621.47 $\pm$ 3186\\
UCB & 15557.03 $\pm$ 81 & 7722.83 $\pm$ 85 & 5517.39 $\pm$ 194 & 3782.69 $\pm$ 181\\
TS & 42613.60 $\pm$ 117 & 26953.07 $\pm$ 292 & 10797.99 $\pm$ 540 & 2280.35 $\pm$ 474\\
SWUCB & 36772.11 $\pm$ 49 & 35568.11 $\pm$ 108 & 35595.94 $\pm$ 329 & 36058.06 $\pm$ 640\\
MASTER & 19597.48 $\pm$ 210 & 14193.67 $\pm$ 342 & 12507.24 $\pm$ 376 & 11449.53 $\pm$ 416\\
\hline
ActivePTW & 28429.17 $\pm$ 82 & 10837.19 $\pm$ 139 & 2849.74 $\pm$ 143 & 624.28 $\pm$ 114\\
ParanoidPTW & 29106.90 $\pm$ 87 & 11711.55 $\pm$ 136 & 3674.19 $\pm$ 140 & 1440.15 $\pm$ 83\\
\hline
\end{tabular}
\end{center}
\label{tbl:geo4}
\end{table}


\end{document}